\newtheorem{lemma}{Lemma}
\newtheorem{prop}{Proposition}
\newcommand{\nothing}{\varnothing}
\title{A bifurcation threshold for contact-induced language change}
\author{Henri Kauhanen}
\affil{University of Konstanz}
\date{\normalsize 18 August 2022}
\begin{document}

\maketitle

\begin{abstract}
One proposed mechanism of language change concerns the role played by second-language (L2) learners in situations of language contact. If sufficiently many L2 speakers are present in a speech community in relation to the number of first-language (L1) speakers, then those features which present a difficulty in L2 acquisition may be prone to disappearing from the language. This paper presents a mathematical account of such contact situations based on a stochastic model of learning and nonlinear population dynamics. The equilibria of a deterministic reduction of the model, describing a mixed population of L1 and L2 speakers, are fully characterized. Whether or not the language changes in response to the introduction of L2 learners turns out to depend on three factors: the overall proportion of L2 learners in the population, the strength of the difficulty speakers face in acquiring the language as an L2, and the language-internal utilities of the competing linguistic variants. These factors are related by a mathematical formula describing a phase transition from retention of the L2-difficult feature to its loss from both speaker populations. This supplies predictions that can be tested against empirical data. Here, the model is evaluated with the help of two case studies, morphological levelling in Afrikaans and the erosion of null subjects in Afro-Peruvian Spanish; the model is found to be broadly in agreement with the historical development in both cases.

\bigskip

  \noindent\textbf{Keywords:} language change; language complexity; language contact; learning models; nonlinear dynamics; sociolinguistic typology
\end{abstract}

\section{Introduction}
\label{sec:introduction}

The role played by second-language (L2) speakers in language change has attracted increasing interest in recent years, uniting research traditions in historical linguistics \autocite{Wee1993}, sociolinguistic typology \autocite{Tru2004,Tru2010,Tru2011}, language complexity \autocite{Kus2003,Kus2008,McWho2011} and second-language acquisition \autocite{SorSer2009,BerSem2022}. If L2 learners generally struggle to acquire certain linguistic features in a target-like fashion, and if L2 learners are sufficiently prevalent in a speech community in a particular historical situation, then features of this kind, namely L2-difficult features, may be expected to be lost from the language over extended time. This mechanism---change triggered by L2-difficulty---is supported both by empirical studies in L2 acquisition and artificial language learning \autocite[e.g.][]{MarBel2006,BerSem2022} and by large-scale typological studies which suggest a negative correlation between the proportion of L2 speakers and language complexity \autocite{LupDal2010,BenWin2013,SinDiGar2018,Sin2020}.

Although quantitative laboratory and typological studies are essential in supplying the empirical content of this hypothesis, many open questions remain about the large-scale population dynamics of this proposed mechanism of contact-induced change. For instance, just how many L2 speakers does it take to tip the balance in a contact situation? Clearly, a small minority of L2 learners is generally insufficient to lead to wholesale change. Under what conditions, then, do those who speak the language as a first language (L1) also adopt the new (non-L2-difficult) variety? Such questions can be tackled in a mathematical modelling approach, in which predictions can be generated and the range of possible behaviours of dynamical systems understood in full detail.

This paper presents a first sketch of such an approach, beginning from simple and general assumptions about language learning and social interaction. Building on the variational model of language acquisition and change \autocite{Yan2002}, I propose an extension of this model which subsumes both L1 and L2 learning under the same general mechanism. The extended model assumes that L1 and L2 learning are fundamentally alike processes, in the sense that they both depend on general mechanisms of reinforcement \autocite{BusMos1955}. In the extended model, however, adult L2 learning is potentially disadvantaged in the sense that these learners face a difficulty in the case of certain linguistic features acquired unproblematically by children. To study the population-dynamic consequences of L1 and L2 learners interacting in the same population, the extended model considers a mixed speech community. This yields testable predictions about the eventual fates of linguistic features exposed to contact situations of different kinds, including predictions about the stable states attained, conditioned on model parameters such as the proportion of L2 speakers in the population and the relative strength of the L2-difficulty of the linguistic variable involved.

The model allows a number of qualitatively different evolutionary outcomes: complete extinction of the L2-difficult feature, temporary loss of the L2-difficult feature followed by its reacquisition, or stable variation between two grammars, one of which carries the L2-difficult feature and one which does not. These theoretically predicted outcomes correspond to different patterns observed---to varying degrees of detail---in the empirical record. To test the model, I apply it to two empirical case studies: the deflexion of the verbal paradigm in Afrikaans \autocite{Tru2011} and the erosion of null subjects in Afro-Peruvian Spanish \autocite{SesGut2018}. Model parameters are estimated from empirical data where possible, and intuitive arguments for plausible orders of magnitude are provided where direct empirical calibration is impossible. The model is found to correctly characterize the broad outlines of both empirical developments, although challenges arise both from the lack of relevant empirical data and from a number of idealizing assumptions that must be made in order to simplify the model's mathematics in the interest of retaining analytical tractability.

It should be emphasized that the model presented and analysed in this paper is concerned with only one potential external (``environmental'') predictor of contact-induced change, namely the proportion of L2 speakers in a speech community. Other possible external factors conditioning language change have been considered in the literature, ranging from population size \autocite{LupDal2010,Net2012,Kop2019} to social network geometries \autocite{KeGonWan2008,FagEtal2010,Kau2017,JosEtAl2021} or combinations of such parameters \autocite{Tru2004}. Although the present model sets such effects aside, this is not to deny their importance. Indeed, further variables may be introduced to the modelling framework whose broad outlines are laid down in this paper, in future work. It is quite likely that in a complex process such as language change, multiple factors interact in intricate ways. In fact, this is already the case for the factors considered in the present paper, L2 speaker proportion (a demographic parameter), strength of L2-difficulty (a psychological parameter) and parsing advantage (a linguistic parameter), as will be discussed at length below.

The paper is structured as follows. Section \ref{sec:explananda} provides a brief description of the two empirical test cases. Language learning, both L1 and L2, is discussed in Section \ref{sec:learning}, with the goal of characterizing the terminal state of a population of learners after a long learning period in a stationary random learning environment. This terminal state is then utilized to define inter-generational population dynamics in Section \ref{sec:population-dynamics}, in which the equilibria of the resulting nonlinear system of equations are studied. Relevant linguistic and demographic parameters are calibrated in Section \ref{sec:application} using available empirical data, followed by application of the mathematical model. Section \ref{sec:concluding-remarks} concludes, discusses some of the limitations of the approach, and provides a few directions for future research. To streamline discussion throughout the paper, most mathematical derivations are collected in an Appendix.

\section{The explananda}
\label{sec:explananda}

\subsection{Afrikaans verbal deflexion}

Germanic languages typically exhibit complex verbal morphology, with numerous forms found throughout the verbal paradigms, often without a transparent one-to-one mapping between form and meaning:
\begin{example}
  \label{ex:laufen}
  \emph{German}\\ laufen `to run'

  \begin{tabular}{lll}
    & singular & plural \\
    1st & laufe & laufen \\
    2nd & läufst & lauft \\
    3rd & läuft & laufen
  \end{tabular}
\end{example}
\begin{example}
  \label{ex:lopen}
  \emph{Dutch}\\ lopen `to run/walk'

  \begin{tabular}{lll}
    & singular & plural \\
    1st & loop & lopen \\
    2nd & loopt & lopen \\
    3rd & loopt & lopen
  \end{tabular}
\end{example}
In this respect Afrikaans, which began as a contact variety of Dutch in the Cape Colony in the 17th and 18th centuries, behaves rather differently. Modern Afrikaans has only one form throughout the paradigm (corresponding to the stem of the Dutch verb):
\begin{example}
  \label{ex:loop}
  \emph{Afrikaans}\\ loop `to walk'

  \begin{tabular}{lll}
    & singular & plural \\
    1st & loop & loop \\
    2nd & loop & loop \\
    3rd & loop & loop
  \end{tabular}
\end{example}
This deflexion of verbal paradigms is just one manifestation of a larger-scale morphological regularization that separates Afrikaans from its ancestor \parencite[see][]{Pon1993}.

The specific and rather unique setting in which Afrikaans arose has prompted scholars to discuss its origin extensively. There are three major competing theories: the superstratist hypothesis, according to which the structural features of Afrikaans arose from Dutch in a process of internal development; the interlectalist hypothesis, according to which the structure of Afrikaans is explained by competition between multiple different dialects of Dutch that were brought to the Cape; and the creolist hypothesis, according to which Afrikaans is a creole or semicreole that arose from the interaction between the colonizers, the native Khoekhoe population, and slaves brought by the Europeans from other parts of Africa and parts of Asia \autocite{Rob2002}. It is clear, however, that contact must have played some role in the formation of the language; the specific classification of Afrikaans as a creole, semicreole or other kind of contact variety is less relevant here.

If the three main population constituents of the Dutch Cape Colony were in extensive linguistic contact, as seems likely \autocite{Pon1993,Rob2002}, then an amount of L2 learning will have taken place: both the native Khoekhoe and the imported slaves would have had to learn at least a limited amount of Dutch to communicate with the colonizers. The (adult) L2 Dutch spoken in the colony would form part of the input of the following generations of L1 learners, and through this mechanism of nativization of L2 output, changes could stabilize as part of the language that was in the process of formation \autocite{Tru2011}.\footnote{On the loose-knit nature of early Dutch Cape society, \textcite[3]{Pon1993} writes: ``there was a lack of community structure such as that which characterised the early New England settlement, where not only families but entire communities crossed from England to America. Cape society reconstituted itself beyond the purview of the VOC [Vereenigde Oostindische Compagnie, the Dutch East India Company]: new, extended families crystallised from what was at first a motley crowd of men with only a few women, and a speech community came into being where Dutch was in competition with other languages.'' Three observations are particularly important to understand the sociolinguistic situation of early Cape Colony. First, the Dutch East India Company explicitly required Dutch (and not, for instance, Portuguese) to be used in communication with slaves \parencite[25]{Pon1993}. Secondly, it is suspected that childcare was in many cases left to the responsibility of slave or Khoekhoe women, who ``transferred to them their own approximate (broken) Dutch'' \parencite[8]{Pon1993}. Finally, the Cape was not a plantation colony but rather one in which slaves were distributed relatively uniformly across the population \parencite[12]{Pon1993}.} Complex meaning-to-form mappings are thought to be difficult for L2 learners generally \autocite{DeKey2005}, but experimental evidence lends an additional degree of credibility to this general idea in the specific case of the Dutch verb. \textcite{Blo2006} compared child L1, child L2 and adult L2 learners of Dutch for knowledge of Dutch verbal morphology by way of a sentence completion task; the L2 learners had either Turkish or Moroccan backgrounds and spoke Turkish, Moroccan Arabic or Tarifit (a Zenati Berber language of Morocco) as their L1. Compared to Dutch L1 learners, who exhibited an accuracy of 96\% standard use of the verbal paradigm, child L2 learners showed 83\% (Turkish) or 85\% (Moroccan) accuracy. The adult L2 learners, however, attested only 57\% (Turkish) or 56\% (Moroccan) overall accuracy. This suggests that verbal morphology is an L2-difficult feature to acquire (irrespective of the learner's L1) and may thus favour paradigmatic levelling (regularization) when a significant number of adult L2 learners are involved in the contact situation.

\subsection{Erosion of null subjects in Afro-Peruvian Spanish}
\label{sec:afro-peruvian-spanish}

Consistent null subject (NS) languages such as standard forms of Spanish and Italian exhibit the omission of non-emphatic, non-contrastive referential subject pronouns in finite clauses \autocite[see][]{RobHol2010}, illustrated here for Spanish:
\begin{example}
  \label{ex:hablo-espanol}
  \digloss{$\nothing$ Hablo español.}{{} speak Spanish}{I speak Spanish.}
\end{example}
In some contexts, such as with expletive subjects, a null pronoun is strictly obligatory:
\begin{examples}
\item \digloss{$\nothing$ Llueve.}{{} rains}{It's raining.}
\item \label{ex:el-llueve} \digloss{*Ello llueve.}{it rains}{It's raining.}
\end{examples}
An overt referential subject is, however, required under certain circumstances, and the use of null vs.~overt pronouns in general is typically conditioned by complex semantic and pragmatic considerations. Consider the following examples \autocite[via][226]{Mon2004}:
\begin{examples}
\item\label{ex:montrul-1} \digloss{Nadie\textsubscript{i} dice que $\nothing$\textsubscript{i/j} ganará el premio.}{no-one says that {} {will win} the prize}{No-one says that he will win the prize.}
\item\label{ex:montrul-2} \digloss{Nadie\textsubscript{i} dice que él\textsubscript{*i/j} ganará el premio.}{no-one says that he {will win} the prize}{No-one says that he will win the prize.}
\end{examples}
With a null pronoun, the understood subject of the embedded clause in this example may or may not be coreferential with the subject of the main clause \eqref{ex:montrul-1}, whereas with an overt pronoun the referents of the subjects must differ: the interpretation in which the pronoun is bound is strictly ungrammatical if the pronoun is overt \eqref{ex:montrul-2}. Evidence now exists that features of this kind---ones that hinge on the syntax--semantics and syntax--pragmatics interfaces---are difficult for L2 learners to acquire (\citeauthor{SorSer2009} \citeyear{SorSer2009};~see also \citeauthor{WalBre2019} \citeyear{WalBre2019} for a theoretical account of the diachrony of null subjects as loss/gain of uninterpretable features).

Rates of subject expression differ across varieties of Spanish, with American varieties exhibiting significantly higher rates of overt subjects compared to Peninsular Spanish: \textcite[195]{MarSanPhD} reviews the available literature and shows that these rates range from as low as 12\% in Valladolid, Spain, to 60\% in San Juan, Puerto Rico. In particular, it has been suggested that a number of Afro-Hispanic Languages of the Americas (AHLAs)---languages that emerged from contact between Spanish and African languages in the Americas in colonial settings---attest mixed\footnote{I use the term \emph{mixed} rather than ``partial'' \autocite{SesGut2018} here, as the latter term has taken on a specific theoretical meaning in the generative literature on null subjects \autocite[see][]{RobHol2010}. It is unclear, and from the point of view of the present study of secondary interest, whether Afro-Peruvian Spanish is a partial NS language in the latter sense; the relevant fact is that the language lies, \emph{in some sense}, between the polar extremes of being a consistent NS language and being a (consistent) non-NS language.} pro-drop systems \autocite{SesGut2018}. These languages not only employ overt subjects where Peninsular Spanish uses null subjects \eqref{ex:sessarego-1}, they may also exhibit features of both NS and non-NS languages simultaneously \eqref{ex:toribio}:
\begin{examples}
\item\label{ex:sessarego-1}
  \emph{Afro-Peruvian Spanish} \autocite[384]{Ses2014}
  \digloss{Paco fue a casa. Él se tomó una botella de cerveza y después él se fue al bar de fiesta.}{Paco went to home he himself took a bottle of beer and afterwards he himself went {to the} bar of party}{Paco went home. He drunk a bottle of beer and then went to the bar to have fun.}
\item\label{ex:toribio}
  \emph{Dominican Spanish} \autocite[319]{Tor2000}
  \digloss{Yo no lo vi, él estaba en Massachusetts, $\nothing$ acababa de llegar.}{I not him saw he was in Massachusetts {} finished of arrive}{I did not see him, he was in Massachusetts, he had just arrived.}
\end{examples}
One way of making sense of such data is to hypothesize that speakers have access to more than one grammar simultaneously \autocite{Kro1989,Yan2002}---in this case, to both a NS and a non-NS version of Spanish---and employ these competing grammars variably and probabilistically (\citeauthor{Tor2000} \citeyear{Tor2000}; \citeauthor{SesGut2018} \citeyear{SesGut2018}; for an extensive variationist account of subject pronoun expression in Spanish--English bilinguals in New Mexico, see \citeauthor{TorTra2018} \citeyear{TorTra2018}).

Empirical learning data are again relevant in suggesting explanations for the observed patterns. \textcite{MarBel2006} tested Greek adult L2 learners of Spanish on the expression of NS pronouns; the results demonstrate that L2 learners tend to overuse overt pronouns compared to native speakers, even when the L1 is a NS language. Specifically, in a cloze task in which participants had to either express or omit a subject pronoun, intermediate L2 learners employed null subjects 52\% of the time in matrix clauses and 81.66\% of the time in subordinate clauses, compared to 85.50\% and 98.13\% in advanced learners, and 96.00\% and 100\% in native controls \autocite[92]{MarBel2006}. These findings suggest that L2 learners---with the possible exception of those at an advanced proficiency level---struggle with the precise pragmatic conditioning of the null/overt contrast in a NS language \autocite[for converging evidence from different language pairings, see][]{Bin1993,PerGla1999}. They thus support the notion that a contact situation involving (imperfect) L2 learning followed by L1 nativization may help to explain the diachronic erosion of null subjects.

In particular, such an explanation is plausible in the case of Afro-Peruvian Spanish, a variety spoken mainly in the coastal regions of modern Peru \autocite{Ses2014,SesBook,SesGut2018}. Afro-Peruvians---descendants of slaves who were brought from various parts of Africa and forced to work on plantations, in mines and as servants in cities from the 16th century onwards---amounted to 3.6\% of Peru's population in the most recent, 2017 census \autocite[222]{PeruCensus}. Although Peru abolished slavery formally in 1854, most of the Afro-Peruvian population lived in relative poverty under a semi-feudal system until as recently as the second half of the 20th century \autocite[79]{SesBook}. The probable linguistic consequences of these sociohistorical facts will be discussed in detail in Section \ref{sec:application}.

\subsection{Desiderata}

Previous research thus suggests that the deflexion of the Afrikaans verbal paradigm and the erosion of null subjects in Afro-Peruvian Spanish may both be traced back to an earlier contact situation involving significant amounts of L2 learning. Apart from explicating what a ``significant amount'' means in this context, thereby providing a unified account of what is common to both cases, a mathematical modelling approach needs to be able to account for the \emph{differences} between the two situations. In the case of Afrikaans, the loss of verbal morphology is complete: all verbs are reduced to one form throughout the paradigm. In the case of Afro-Peruvian Spanish, and several other AHLAs, the loss of null subjects is incomplete, in the sense that these languages attest features of both NS and non-NS languages. Moreover, in the case of Afro-Peruvian Spanish at least, there is evidence of the younger generations moving in the direction of the NS grammar again, suggesting that the mixed NS status may not be stable; fieldwork interviews suggest that the Afro-Peruvian variety could be lost in favour of standard Peruvian Spanish ``in two generations, or maybe only one'' \autocite[397]{Ses2014}. In the ideal situation, a modelling approach would predict in what circumstances---under what combinations of model parameters---each development is likely to unfold.

\section{Two learning algorithms}
\label{sec:learning}

At a very general level, learning can be characterized as a stochastic process of modification of a distribution of probabilities to act in specific ways. These modifications are prompted by the outcomes of previous actions in a learning environment which supplies feedback to the learner \autocite{BusMos1955}. The variational learning (VL) framework \autocite{Yan2002} constitutes a linguistic interpretation of this general statement. In the simplest case, the learner must make a binary choice between two options, such as between employing null and overt subject pronouns. The learner stores a probability of use of one of these options, $p$; learning consists of modifications to $p$ in response to previous interactions between learner and environment by way of a set of operators, which mathematically speaking are simply functions applied to $p$ to transform its value.

Most applications of VL to date have assumed Bush and Mosteller's (\citeyear{BusMos1955}) one-dimensional linear reward--penalty learning scheme \autocite[e.g.][]{Yan2000,HeyWal2013,IngLegYan2013,Dan2017,KauWal2018,SimEtAl2019}. This learning algorithm makes use of two linear operators, $f$ and $g$, defined as follows:
\begin{equation}
  \label{eq:LRP}
    \begin{aligned}
      f(p) &= p + \gamma (1-p), \\
      g(p) &= p - \gamma p,
    \end{aligned}
\end{equation}
where $p$ is the probability of grammatical option $G_1$, $1-p$ is the probability of grammatical option $G_2$, and $0 < \gamma < 1$ is a learning rate parameter. Operator $f$ is applied if the learner chooses $G_1$ and this choice manages to parse the input received by the learner at that learning step. From the form of the operator, it is easy to see that application of this operator constitutes a reward to $G_1$: the next time the learner has to make a choice between $G_1$ and $G_2$, they are more likely to choose $G_1$ than before. On the other hand, if $G_1$ is selected but does not parse the input received by the learner, operator $g$ is applied, disfavouring this grammatical option, meaning that the value of $p$ is decreased. Should the learner choose $G_2$, the logic of application of the operators is flipped (with only two options, rewarding $G_1$ is tantamount to punishing $G_2$, and vice versa).

Linear reward--penalty learning has a number of attractive mathematical properties, the most important of which have to do with the expected eventual state attained by the learner. Since learning is a stochastic process, it is impossible to predict the evolution of $p$ over time exactly (unless we have full knowledge of the sequence of the learner's choices and the learning environment's responses at each learning step---but this is impossible except in a strict laboratory setting). However, both the expected (mean) value of $p$ and its variance admit explicit solutions (see \citeauthor{BusMos1955} \citeyear{BusMos1955} and the Appendix). With increasing learning iterations the expectation, denoted by $\langle p \rangle$, eventually tends to the asymptotic value
\begin{equation}
  \label{eq:LRP-asymptote}
  \langle p \rangle_{\infty} = \frac{\pi_2}{\pi_1 + \pi_2}
\end{equation}
in the limit of infinite learning iterations. Here, $\pi_1$ and $\pi_2$ are the probabilities with which the environment punishes $G_1$ and $G_2$; these can be estimated in normal circumstances, as will be discussed in more detail below. Furthermore, the variance of $p$ at an infinity of learning iterations can be made arbitrarily small by assuming that learning is sufficiently slow, i.e.~if the learning rate $\gamma$ has a small value. Thus, in the limit of long learning periods and slow learning rates, a population of learners are all expected to behave the same, converging to similar values of $p$ at the end of learning. In effect, the behaviour of an entire generation of learners (subject to the same learning environment) can be condensed into one number, namely the limit of the expectation \eqref{eq:LRP-asymptote}.

The empirical justification for this procedure stems from the fact that language learning typically involves very numerous learning iterations. It has been estimated that people speak around 16,000 words a day on average \autocite{MehEtAl2007}. By conversational symmetry, we would expect people, and language learners in particular, to hear a similar number of words every day. Language acquisition ordinarily takes place over several years; translating the 16,000-words-a-day estimate into the number of tokens relevant for the acquisition of any reasonably frequently occurring linguistic variable then yields an estimate in the millions.\footnote{Note that the two empirical variables considered in this paper, verbal inflection (or lack thereof) and subject expression (or lack thereof), will figure in the vast majority of utterances heard by the learner.} Although it is possible that adult L2 learners are exposed to less input than L1 learners, the high order of magnitude of this estimate suggests that it is legitimate to focus on the learner's asymptotic behaviour at large learning iterations, instead of trying to characterize the complex stochastic dynamics of the entire learning trajectory. First pioneered by \textcite{Yan2000}, this strategy has been used by numerous authors to derive population-level or diachronic consequences of sequential generations of such learners \autocite{Yan2002,HeyWal2013,IngLegYan2013,Dan2017,KauWal2018,Kau2019,SimEtAl2019}.

The majority of previous studies applying the VL framework have applied it to L1 learning. The idea of grammar competition has, however, been used in a handful of L2 acquisition studies \parencite{ZobLic2005,Ran2014,Ran2022}, and it is possible also to harness the mathematics of the linear reward--penalty algorithm for the purposes of modelling L2 trajectories formally. To accommodate the existence of L2-difficult features, we include a bias against successful L2 acquisition of one of the competing grammars. This may be done by replacing the operators $f$ and $g$ with the following pair of operators:
\begin{equation}
  \label{eq:L2LRP}
    \begin{aligned}
      f'(p) &= p + \gamma (1-p) - \delta p, \\
      g'(p) &= p - \gamma p - \delta p,
    \end{aligned}
\end{equation}
where $\delta$ is a small positive number that quantifies the difficulty faced by L2 learners in acquiring $G_1$.\footnote{To guarantee that $p$ always remains in the interval $[0,1]$ and thus a probability, one must require that $0 < \delta < 1 - \gamma$. See the Appendix for details.} It is important to note that with this definition, the difficulty is endemic to L2 learning, in the sense that it is independent of the learning environment's responses: no matter how much the environment rewards the use of $G_1$, this option will always suffer some amount of penalty, modulated by the magnitude of $\delta$.

In the Appendix, it is shown that an equivalent asymptotic result holds for this L2 extension of the original linear reward--penalty learning scheme. As learning iteration tends to infinity, the expected value of $p$ with operators $f'$ and $g'$ tends to
\begin{equation}
  \label{eq:L2LRP-asymptote}
  \langle p \rangle'_{\infty} = \frac{\pi_2}{\pi_1 + \pi_2 + d},
\end{equation}
where $d = \delta/\gamma$ represents the L2-difficulty of $G_1$ scaled by the learning rate parameter $\gamma$. Moreover, the variance can again be made arbitrarily small by taking a small enough value of $\gamma$ (and modifying $\delta$ so as to keep the ratio $d$ constant). In other words, a population of L2 learners employing operators $f'$ and $g'$ can be described by a single number at the asymptote just as a population of L1 learners employing the standard linear reward--penalty scheme can.

\section{Population dynamics}
\label{sec:population-dynamics}

\subsection{Motivation}

It is conceptually useful to think of language change as an intertwined process of innovation and propagation. In the present model, L2 learners who acquire a lower probability of employing the L2-difficult grammar (as compared to L1 learners) are innovators. Whether and how these innovations propagate across the population depends, roughly, on how prevalent the L2 learner population is in the entire speech community. In general, interactions between individuals in a speech community are the result of a complex combination of factors involving properties of social networks, geographical distance and interaction frequency, to name but a few. Such factors can in principle be encapsulated in stochastic models of social dynamics inspired by techniques borrowed from statistical physics \autocite{Hel2010}; however, inclusion of too many factors usually renders such models analytically intractable. Alternatively, when populations are large, stochastic fluctuations normally average out, and the system can be reduced to a simpler deterministic model, with correspondingly simpler analysis. This is the approach adopted here, developed in detail in Section \ref{sec:deterministic-limit}. Section \ref{sec:simulations} provides simulation-based support for the deterministic approximation.

\subsection{Deterministic approximation}
\label{sec:deterministic-limit}

Setting aside the complications arising from the full stochastic complexity of learning trajectories and interaction patterns, let us now focus on a simplified model in which learners behave according to the learning-theoretic expectations \eqref{eq:LRP-asymptote} and \eqref{eq:L2LRP-asymptote} and in which population size is so large that demographic noise cancels out. Technically, we consider an infinite well-mixed population of individuals in which the fraction of L2 speakers is $\sigma$ and the fraction of L1 speakers is $1 - \sigma$. Let $p$ and $q$ denote the probabilities of grammar $G_1$ in the L1 and L2 speaker populations, respectively (so that the probabilities of $G_2$ are $1-p$ and $1-q$). Following \textcite{Yan2000}, I assume a fraction $\alpha_1$ of the output of grammar $G_1$ to be incompatible with $G_2$, and similarly a fraction $\alpha_2$ of the output of $G_2$ to be incompatible with $G_1$. The parameters $\alpha_1$ and $\alpha_2$ will be called the \emph{(grammatical) advantages}\footnote{Nomenclature is sometimes confusing in the literature, with some authors using the term ``fitness'' for these quantities and reserving the term ``advantage'' for derived notions such as difference in fitnesses. I here follow the original terminology put forward in \textcite{Yan2000}. Ultimately (see below), the relative fitness of the competing grammatical options will be measured using the ratio $\alpha = \alpha_1/\alpha_2$, with greater (smaller) than unity values signifying that $G_1$ ($G_2$) has more advantage than its competitor.} of $G_1$ and $G_2$ in what follows; numerical estimates will be given in Section \ref{sec:application}. Assuming for simplicity that learners sample linguistic input from their environments at random, the penalty probabilities of the two grammars can then be expressed in the following simple forms:
\begin{equation}
  \label{eq:penalties}
    \begin{aligned}
      \pi_{1} &= (1 - \sigma) \alpha_2 (1-p) + \sigma \alpha_2 (1-q), \\
      \pi_{2} &= (1 - \sigma) \alpha_1 p + \sigma \alpha_1 q.
    \end{aligned}
\end{equation}
To unpack this, note that the first term on the right hand side of the first equation, for instance, represents the event of the learner interacting with an L1 speaker ($1-\sigma$) who employs grammar $G_2$ ($1-p$) and utters a sentence which falls among those not compatible with $G_1$ ($\alpha_2$). Similarly, the second term on the right hand side of the second equation represents the case of the learner interacting with an L2 speaker ($\sigma$) who employs grammar $G_1$ ($q$) and utters a sentence not compatible with $G_2$ ($\alpha_1$), and similarly for the remaining two terms.

Following \textcite{Yan2000}, we can now assume that the input to the language acquisition process of the ($n+1$)th generation of speakers is constituted by the linguistic output of the $n$th generation. Assuming learners reach the learning-theoretic asymptote as argued in Section \ref{sec:learning}, this implies setting
\begin{equation}
  \begin{aligned}
    p_{n+1} &= \langle p_n \rangle_{\infty}, \\
    q_{n+1} &= \langle q_n \rangle_{\infty}',
  \end{aligned}
\end{equation}
with the expectations given by \eqref{eq:LRP-asymptote} and \eqref{eq:L2LRP-asymptote}. Expanding these with the help of the penalties \eqref{eq:penalties}, we arrive at the pair of equations
\begin{equation}
  \label{eq:the-system-before-alpha}
  \begin{aligned}
    p_{n+1} &= \frac{(1 - \sigma) \alpha_1 p_n + \sigma \alpha_1 q_n}{(1 - \sigma) \alpha_2 (1 - p_n) + \sigma \alpha_2 (1 - q_n) + (1 - \sigma) \alpha_1 p_n + \sigma \alpha_1 q_n}, \\
    q_{n+1} &= \frac{(1 - \sigma) \alpha_1 p_n + \sigma \alpha_1 q_n}{(1 - \sigma) \alpha_2 (1 - p_n) + \sigma \alpha_2 (1 - q_n) + (1 - \sigma) \alpha_1 p_n + \sigma \alpha_1 q_n + d}. \\
  \end{aligned}
\end{equation}
To reduce the number of parameters in these equations, it is useful to divide both the numerators and the denominators on the right hand side by $\alpha_2$ (on the assumption that $\alpha_2 \neq 0$). This condenses the relation of the two advantage parameters $\alpha_1$ and $\alpha_2$ into a single number, the advantage ratio $\alpha = \alpha_1 / \alpha_2$:
\begin{equation}
  \label{eq:the-system}
  \begin{aligned}
    p_{n+1} &= \frac{(1 - \sigma) \alpha p_n + \sigma \alpha q_n}{(1 - \sigma) (1 - p_n) + \sigma (1 - q_n) + (1 - \sigma) \alpha p_n + \sigma \alpha q_n}, \\
    q_{n+1} &= \frac{(1 - \sigma) \alpha p_n + \sigma \alpha q_n}{(1 - \sigma) (1 - p_n) + \sigma (1 - q_n) + (1 - \sigma) \alpha p_n + \sigma \alpha q_n + D}. \\
  \end{aligned}
\end{equation}
The quantity $D = d/\alpha_2$ represents the relative L2-difficulty of grammar $G_1$ scaled by the advantage of grammar $G_2$.

To recap, we have assumed grammatical competition between two options $G_1$ and $G_2$, the first of which is assumed to incur an amount of L2-difficulty, quantified by the ratio $d = \delta/\gamma$ of raw L2-difficulty $\delta$ to underlying learning rate $\gamma$. The two learning algorithms of Section \ref{sec:learning} lead to a mixed population of L1 and L2 speakers whose dynamics are described by the pair of equations \eqref{eq:the-system}, on the assumption that speakers mix randomly. The dynamics depend on three model parameters:
\begin{itemize}
  \item $\alpha = \alpha_1/\alpha_2$ controls the ratio of grammatical advantages, indicating how much (plain) advantage the L2-difficult grammar $G_1$ has in relation to grammar $G_2$;
  \item $D = d/\alpha_2$ represents the L2-difficulty of grammar $G_1$ in relation to the grammatical advantage of $G_2$;
  \item $\sigma$ gives the proportion of L2 speakers in the population.
\end{itemize}
Intuitively, one expects the (diachronic) loss of the L2-difficult grammar $G_1$ to be more likely for higher values of $D$ and $\sigma$, as well as for lower values of $\alpha$. These expectations are borne out by exact mathematical analysis, as will be described next.

In \eqref{eq:the-system}, the variables $p$ and $q$ represent the relative frequency of the L2-difficult grammar $G_1$ in the L1 and L2 speaker populations, respectively. We would now like to understand the range of behaviours this system is capable of. Each pair of values $(p,q)$ with $0 \leq p,q \leq 1$ is a possible population state, or in other words, the state space of the system consists of the unit square $[0,1] \times [0,1] = [0,1]^2$. Of particular interest is the eventual fate of the population under the above modelling assumptions. In general, some points $(p,q)$ of the state space may be expected to be attractors, drawing the population state to themselves over time, while other points may repel the population state. These configurations of the state space correspond to different empirical outcomes, as will be demonstrated next.

In the Appendix, it is proved that the system \eqref{eq:the-system} can display one of two eventual outcomes for any fixed combination of model parameters $\alpha$, $D$ and $\sigma$. These are:
\begin{enumerate}
  \item[I.] The L2-difficult grammar $G_1$ is used with some probability in one or both speaker groups. Mathematically, the system has two equilibria, the origin $(p,q) = (0,0)$ and a further, non-origin state $(p^*, q^*) \neq (0,0)$. The former, however, is unstable, while the latter is asymptotically stable (a sink). The population will be attracted to $(p^*, q^*)$ over time, with the consequence that the L2-difficult feature is retained in the language---with frequency $p^*$ in the L1 speaker population, and with frequency $q^*$ in the L2 speaker population.
  \item[II.] The L2-difficult grammar $G_1$ vanishes from \emph{both} groups. Mathematically, only one equilibrium exists, namely the origin $(p,q) = (0,0)$, which is asymptotically stable. Consequently both the L1 and L2 populations eventually speak $G_2$ exclusively.
\end{enumerate}
Of these outcomes, the latter corresponds to the proposed mechanism of L2 mutation followed by L1 nativization. The passage from phase I to phase II is governed by a complex interaction of the three parameters, one that however can be solved analytically. First, if $\alpha < 1$, so that grammar $G_1$ has less advantage to begin with, phase II is predicted. This is unsurprising: if both grammatical fitness in the VL sense and L2-difficulty in the Trudgillian sense conspire against a grammatical option, no force exists to sustain it, and the option is predicted to disappear.\footnote{The case $\alpha < 1$ is also empirically uninteresting in most situations, as it begs the question how grammar $G_1$ could have established itself in the L1 population in the first place, if it has less grammatical advantage than its competitor $G_2$.} The same outcome holds for the special case $\alpha = 1$, in which the grammatical advantages are equal.

For $\alpha > 1$, the situation is more complicated. If $\alpha \geq D + 2$, phase I is predicted for any combination of parameter values. In this case, the pure grammatical advantage enjoyed by the L2-difficult grammar $G_1$ is so high that no amount of L2 learning can suppress it entirely---the L1 speaker population will continue to speak $G_1$ at some non-zero frequency. For $1 < \alpha < D + 2$, however, the proportion of L2 speakers $\sigma$ acts as a bifurcation parameter. For values of $\sigma$ below a critical threshold
\begin{equation}
  \label{eq:sigma-crit}
  \sigma_{\mathrm{crit}} = \frac{(\alpha - 1)(D + 1)}{\alpha D},
\end{equation}
phase I is predicted. In this case, the proportion of L2 learners in the population is not high enough to completely suppress $G_1$. For values of $\sigma$ exceeding this threshold, however, phase II is predicted: the non-origin equilibrium $(p^*,q^*)$ vanishes and the population converges to $(p,q)=(0,0)$, with both L1 and L2 speakers now employing grammar $G_2$ exclusively. In other words, as the proportion of L2 learners in the population grows, the speech community experiences a phase transition from phase I to phase II (Figure \ref{fig:bifurcation} and Table \ref{tab:fates}).\footnote{A reviewer asks why the proportion of L2 speakers $\sigma$ is chosen as the critical parameter, rather than $\alpha$ or $D$. Indeed, from a mathematical point of view the choice is arbitrary: as long as the inequality $\sigma > \sigma_{\mathrm{crit}}$ is satisfied, loss of the L2-difficult feature is predicted. So, for instance, we may hold $\alpha$ and $D$ fixed and vary $\sigma$ to cross from one phase to the other, but we may equally well hold $\alpha$ and $\sigma$ fixed and vary $D$ to undergo that phase transition. However, from a substantive point of view, the proportion of L2 speakers $\sigma$ is the only parameter that routinely changes its value---the advantage ratio $\alpha$ stays fixed as long as no other grammatical changes occur in the language, and $\delta$ (and, by extension, $D$ whenever $\alpha_2$ is fixed) is presumably constant as it reflects a universal psychological bias. Hence it is natural to treat $\sigma$ as the critical control parameter.}

\begin{figure}
  \centering
  \includegraphics[width=1.0\textwidth]{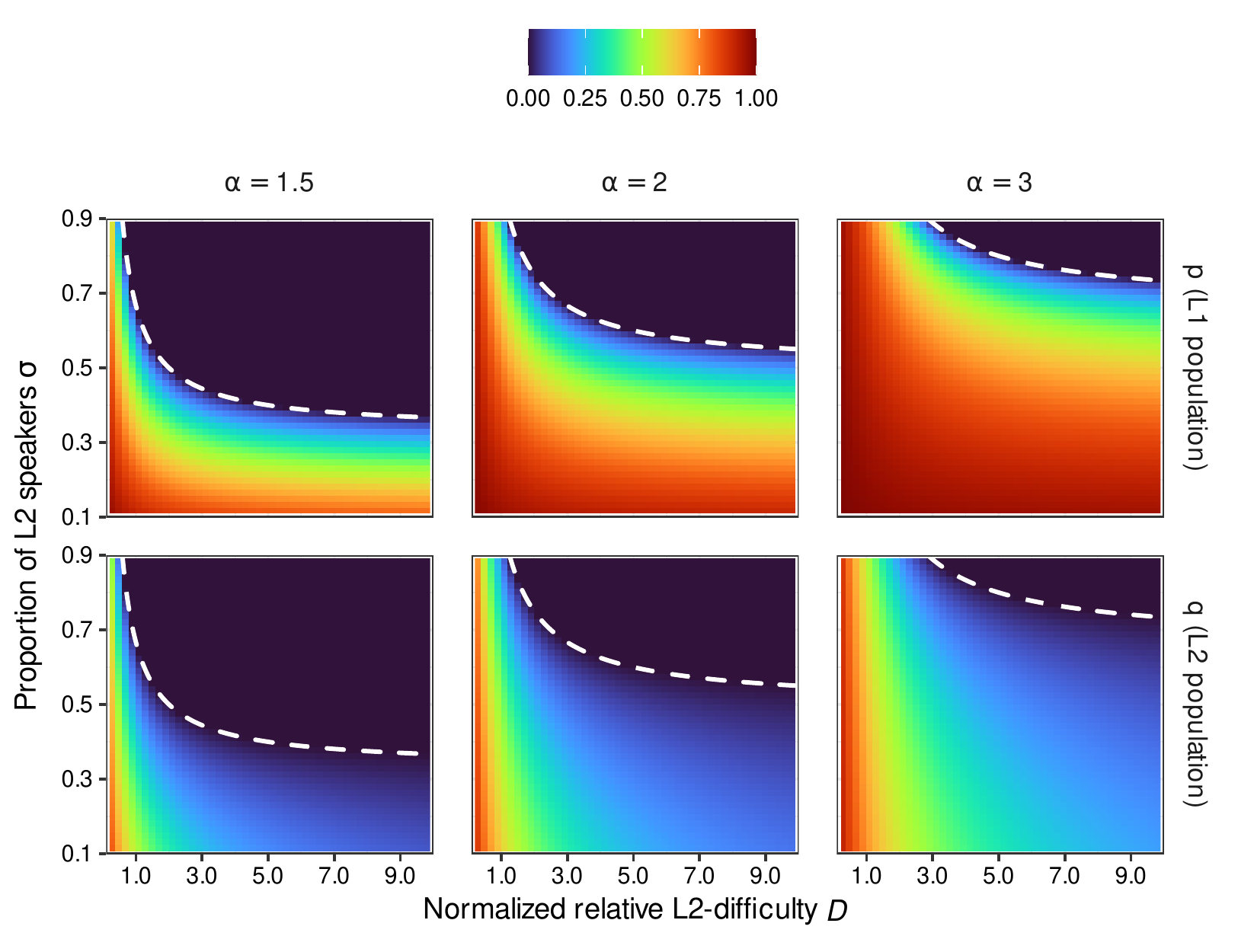}
  \caption{Orbit diagram of the system showing the values of $p$ (top row) and $q$ (bottom row) at the stable equilibrium, for various values of advantage ratio $\alpha$ (columns), L2-difficulty $D$ (horizontal axis) and proportion of L2 speakers $\sigma$ (vertical axis). The dashed line supplies the bifurcation boundary $\sigma_{\textnormal{crit}}$: the L2-difficult grammar $G_1$ is extinct in each speaker population above this line, but coexists with grammar $G_2$ below it.}\label{fig:bifurcation}
\end{figure}

\begin{table}
  \centering
  \caption{Eventual outcome of contact situation, for different combinations of advantage ratio $\alpha = \alpha_1/\alpha_2$, normalized L2-difficulty $D = d/\alpha_2$ and proportion of L2 learners $\sigma$. The critical value of the latter is given in \eqref{eq:sigma-crit}.}\label{tab:fates}
  \begin{tabular}{lr}
    Condition & Fate of L2-difficult grammar $G_1$ \\
    \hline
    $0 < \alpha \leq 1$ & lost \\
    $1 < \alpha < D+2$ and $\sigma > \sigma_{\mathrm{crit}}$ & lost \\
    $1 < \alpha < D+2$ and $\sigma < \sigma_{\mathrm{crit}}$ & retained \\
    $\alpha \geq D+2$ & retained \\
    \hline
  \end{tabular}
\end{table}

\subsection{Finite simulations}
\label{sec:simulations}

The above deterministic model turns on the assumption that learners are well-described by the learning-theoretic asymptotic expectations \eqref{eq:LRP-asymptote} and \eqref{eq:L2LRP-asymptote}, as well as on the assumption that interactions between speakers are sufficiently random so that stochastic fluctuations in the population average out. These idealizing assumptions were made deliberately, to unleash the greater explanatory power of analytically soluble models, compared to simulations \parencite[see][4--11]{McElrBoy2007}. Nevertheless, the assumptions can be debated, and the question of how a finite, stochastic system would behave is certainly an interesting one. Although it is impossible to explore the entirety of the full stochastic model's parameter space by way of simulations in this paper, I here report the outcome of proof-of-concept simulations which lend tentative support to the deterministic approximation analysed in Section \ref{sec:deterministic-limit}.

The choice to abstract away from the individual-level stochastic dynamics of language acquisition was defended by way of an argument from the typically slow pace of language acquisition in Section \ref{sec:learning}; this entails assuming that learners receive a large amount of input during their learning period and also make only small adjustments at each presentation of input token, so that the learning rate parameter $\gamma$ has a small value. In the absence of direct empirical estimates of $\gamma$, it is legitimate to worry about the effects that high learning rates may have on the ensuing dynamics. To explore this, ten Variational Learners were simulated for varying values of $\gamma$, exposed to a learning environment in which the relative frequency of grammar $G_1$ was $0.5$ and the advantage parameters had the values $\alpha_1 = 0.25$ and $\alpha_2 = 0.2$. It was moreover assumed that grammar $G_1$ incurs an L2-difficulty of $d = 2$. Of the ten learners, five were randomly assigned to be L1 learners; the other five were L2 learners subject to the L2-difficulty of $G_1$. Each learner received 100,000 input tokens over their learning period and started from a randomly drawn initial value of $p$ or $q$.

\begin{figure}
  \centering
  \includegraphics[width=1.0\textwidth]{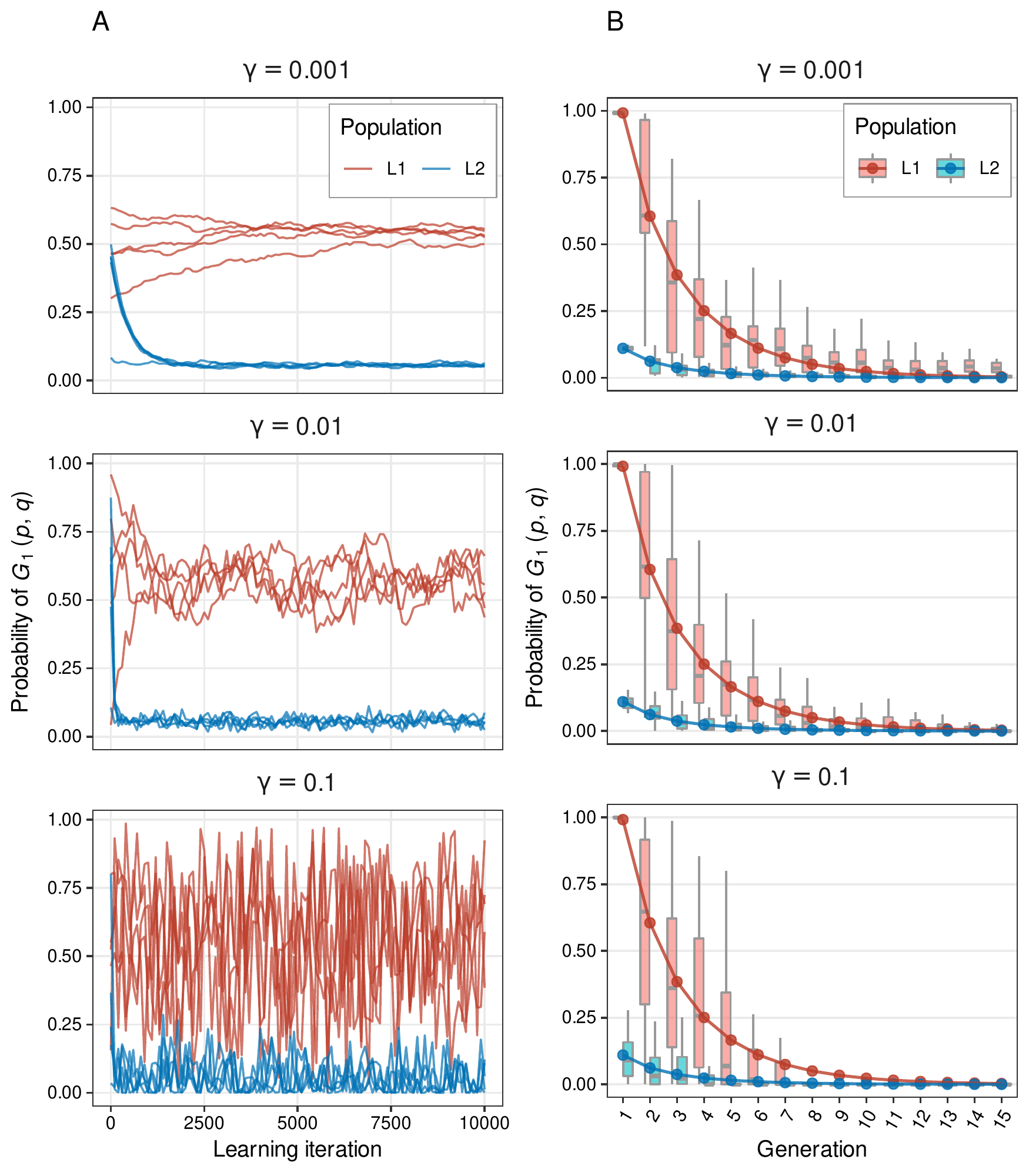}
  \caption{Finite simulations of the model. (A) Learning trajectories of 10 individual learners, half of them L1 and the other half L2 learners, for different learning rates $\gamma$. First 10,000 iterations shown only, sampled every 100 iterations for increased clarity; behaviour after this initial transient period is identical. (B) Diachronic trajectory of 15 generations of learners, 100 learners in each generation (half L1, half L2). The boxplots characterize the simulated data, i.e.~the distribution of $p$ (or $q$) across all learners at the end of their learning period. The solid curves give the deterministic prediction using equation \eqref{eq:the-system}. For simulation parameters, see text.}\label{fig:simulations}
\end{figure}

With these assumptions, the learning-theoretic expectations \eqref{eq:LRP-asymptote} and \eqref{eq:L2LRP-asymptote} predict an average probability of $p = 0.56$ for grammar $G_1$ in the L1 learner population, and an average of $q = 0.06$ in the L2 learner population. Figure \ref{fig:simulations}A presents the simulation results. As expected, the predicted averages describe the population averages. On the other hand, for higher learning rates (larger values of $\gamma$), a larger residual variance about this expectation is observed across the population. It turns out, however, that this residual inter-learner variance, even when large, need not affect inter-generational, diachronic developments. Consider Figure \ref{fig:simulations}B, which tracks the inter-generational trajectory of $p$ and $q$ across $15$ generations of learners, each generation consisting of 100 learners (half L1, half L2), starting from $p = q = 0.99$ in generation $0$ and subject to the same parameters as above. To channel input from generation $n$ to generation $n+1$ in a reasonably realistic way, each learner in generation $n+1$ was randomly drawn two ``parents'' from generation $n$; each learner only received input from its parents. The consequence is that, particularly in the intermediate stages of the diachronic development, and particularly when the learning rate parameter $\gamma$ has a high value, there is much variation across speakers in the community. However, since the ultimate force driving the evolution of $p$ and $q$ is deterministic in nature (a difference in the advantages of the competing grammars, combined with L2-difficulty of one of the options), this stochastic noise is not sufficient to disturb the development. With the above parameters, equation \eqref{eq:sigma-crit} implies a critical L2 proportion threshold of $\sigma_{\textnormal{crit}} = 0.3$; since the actual proportion of L2 learners in the simulation is $0.5$, we expect grammar $G_1$ to be driven to extinction over inter-generational time, i.e.~the values of $p$ and $q$ to converge toward zero. This is exactly what is observed, even for noisy (high) values of the learning rate parameter. Moreover, the deterministic trajectories computed from \eqref{eq:the-system}, shown in Figure \ref{fig:simulations}B as the connected curves, are broadly consistent with the simulated data in each generation. In fact, with higher learning rates, the predicted equilibrium $(p,q) = (0,0)$ is attained quicker.

\section{Application}
\label{sec:application}

\subsection{Interim summary}

To summarize the results of the foregoing sections, our mixed population of L1 and L2 speakers is capable of two qualitatively different kinds of long-term behaviour. The parameter $\alpha = \alpha_1 / \alpha_2$ expresses the ratio of the advantages of the two competing grammars. If $\alpha < 1$, the state $(p,q) = (0,0)$, in which both the L1 and L2 speaker populations use $G_2$ to the complete exclusion of the L2-difficult grammar $G_1$, is a stable equilibrium. If $\alpha > 1$, this state is either stable or unstable depending on whether $\sigma$, the proportion of L2 speakers in the population, exceeds a critical threshold $\sigma_{\textnormal{crit}}$. The value of $\sigma_{\textnormal{crit}}$, in turn, depends on the magnitudes of $\alpha$ and $D = d/\alpha_2$, the latter expressing the relative L2-difficulty of grammar $G_1$, scaled by the grammatical advantage of $G_2$. In this scenario, two evolutionary outcomes are possible: either total extinction of the L2-difficult grammar $G_1$ (when $\sigma > \sigma_{\textnormal{crit}}$), or stable variation between $G_1$ and $G_2$ (when $\sigma < \sigma_{\textnormal{crit}}$). 

The three model parameters $\alpha$, $\sigma$ and $D$ are all estimable in principle from empirical data: $\alpha$ from the frequencies of occurrence of different types of linguistic constructions, $\sigma$ from population censuses, and $D$ from L2 learning data. Suitable data of the last kind are presently lacking,\footnote{What is required is access to individual-level longitudinal production data from L2 learners over a substantial stretch of their learning trajectory, against which theoretically predicted learning curves could be fit. If learning curves with $\delta > 0$ fit such data better than curves with $\delta = 0$, we would have positive evidence for the L2-difficulty.} but I will next attempt calibration of the first two parameters in the specific cases of Afrikaans deflexion and the erosion of null subjects in Afro-Peruvian Spanish. Here it should be borne in mind that all parameter estimates can be only approximate: some degree of uncertainty necessarily pertains to corpus estimates of grammatical advantages, and historical population data are wrought with problems well known to historians and demographers. I will therefore offer the parameter estimates as an approximate starting point, with the proviso that they come with an unknown degree of statistical uncertainty. This has important consequences on what we take the very goal of modelling to be. Even though exact \emph{quantitative} statements about the equilibrium state of the linguistic system may be out of reach, it is still possible to infer something about the \emph{qualitative} outcome of the contact situation in each empirical test case.

\subsection{Calibrating grammatical advantages}
\label{sec:calibrating-advantages}

In the case of Afrikaans verbal deflexion, the relevant competition is between a grammar that has person and number distinctions in the verbal paradigm ($G_1$, corresponding to Dutch) and a grammar that doesn't ($G_2$, corresponding to Afrikaans). Parsing failure occurs when the learner--listener attaches the wrong interpretation (wrong person or number) to the surface form uttered by the speaker. Since Dutch is a non-NS language, person and number can always be inferred from the nominal domain, even if verbal inflection is eroded. On the face of it, this implies that the raw grammatical advantages of the two grammars are equal, so that $\alpha = 1$. In practice, it may be argued that factors such as channel noise may lend the grammar with verbal inflection ($G_1$, i.e.~Dutch) a slight advantage over the inflectionless grammar; on the other hand, redundant agreement has been suggested to present difficulty for L2 learners \autocite{Kus2008}. However, the magnitudes of these effects are difficult to estimate outside the context of a strict laboratory study. In what follows, I will assume that such effects are negligible at the population level, and thus proceed with the maximum-parsimony assumption that $\alpha = 1$, i.e.~that neither grammar is more advantageous than its competitor.

The case of null subjects in Afro-Peruvian Spanish is markedly different. The NS grammar ($G_1$) is incompatible with overt expletive subjects \eqref{ex:el-llueve-2}, while the non-NS grammar ($G_2$) is incompatible with null thematic subjects \eqref{ex:hablo-espanol-2}:
\begin{examples}
\item\label{ex:el-llueve-2} 
  Overt expletive subject pronouns (* for $G_1$)
  \digloss{Ello llueve.}{it rains}{It's raining.}
\item\label{ex:hablo-espanol-2}
  Null thematic subject pronouns (* for $G_2$)
  \digloss{$\nothing$ Hablo español.}{{} speak Spanish}{I speak Spanish.}
\end{examples}
As one might expect, thematic subjects far outnumber expletive subjects in discourse. Drawing on data on their frequencies of occurrence from multiple sources, \textcite[296, 299]{SimEtAl2019} estimate the grammatical advantage of a NS grammar to be about $\alpha_1 = 0.7$ and that of the corresponding non-NS grammar to be only about $\alpha_2 = 0.05$; that is to say, a difference of over an order of magnitude, implying an advantage ratio of $\alpha = 0.7/0.05 = 14$ in favour of the NS grammar.\footnote{If learners of a NS grammar are learning a syntactic parameter that controls a cluster of surface features, as classical formulations of the null subject parameter expect (\citeauthor{RobHol2010} \citeyear{RobHol2010}; but see \citeauthor{SimEtAl2019} \citeyear{SimEtAl2019} for evidence to the contrary), then there will be further cues in the learner's input that either reward or penalize the two competing settings (such as rich vs.~poor agreement inflection, or the possibility vs.~impossibility of free inversion). Such complications fall outside the scope of the present paper but should be explored in future work.}

This difference between the two case studies is important. In the case of Afrikaans, the fact that the two grammars are equiadvantageous means that, in the absence of any L2 learning, the population of speakers is stable (though not asymptotically) at any value of $p$: if $\sigma = 0$ and $\alpha = 1$, then the value of $p$ is always at equilibrium.\footnote{To see this, we take the first equation from \eqref{eq:the-system} and set $\sigma = 0$ and $\alpha = 1$; the resulting expression $p_{n+1} = p_n$ means that the value of $p$ will not change from generation to generation.} Thus the only forces that could shift the state of the speech community (again, assuming the absence of L2 learners in the population) would have to be stochastic in nature. Once L2 learners facing an L2-difficulty with one of the grammars are introduced into the population, the equilibrium will shift: any amount of L2 learning implies that the origin $(p,q) = (0,0)$ becomes the system's only attractor (Table \ref{tab:fates}). Were those L2 learners to be removed from the population at a later stage, the speech community would then again assume its previous quasistable nature and come to rest at whichever value of $p$ the L2 learning situation brought it to (this will correspond to $p \approx 0$ if enough time has elapsed; see Section \ref{sec:predictions-afrikaans}).

The case of null subjects is radically different. Here the original, L2-difficult grammar $G_1$ is much more advantageous than its competitor, $G_2$. Not only does this mean that the proportion of L2 learners in the population would have to be relatively high for $G_1$ to be completely replaced by $G_2$; the long-term prediction is also that if those L2 learners were to be removed, the population would drift back to the equilibrium $p=1$ whose stability is guaranteed by the asymmetry in grammatical advantages in the absence of any L2 learning.

\subsection{Calibrating demographics}
\label{sec:calibrating-demographics}

\begin{table}[t]
  \caption{Demographics of Dutch Cape Colony, 1670--1820 \autocite[360]{GilElph1979}, together with estimated range for the fraction of people speaking Dutch as L2, $\sigma$ (see text). Column for Khoekhoe includes people of mixed background; these data are not available prior to 1798.}\label{tab:afrikaans-demographics}
  \centering
  \begin{tabular}{|c|c|c|c|c|c|}
    \hline
    Year & Europeans & Free Blacks & Slaves & Khoekhoe & Prop.~L2 ($\sigma$) \\
    \hline
    1670 & 125 & 13 & 52 & --- & $[0.17, 0.34]$ \\
    \hline
    1690 & 788 & 48 & 381 & --- & $[0.18, 0.35]$ \\
    \hline
    1711 & 1,693 & 63 & 1,771 & --- & $[0.26, 0.52]$ \\
    \hline
    1730 & 2,540 & 221 & 4,037 & --- & $[0.31, 0.63]$ \\
    \hline
    1750 & 4,511 & 349 & 5,327 & --- & $[0.28, 0.56]$ \\
    \hline
    1770 & 7,736 & 352 & 8,220 & --- & $[0.26, 0.53]$ \\
    \hline
    1798 & c.~20,000 & c.~1,700 & 25,754 & 14,447 & $[0.34, 0.68]$ \\
    \hline
    1820 & 42,975 & 1,932 & 31,779 & 26,975 & $[0.29, 0.59]$ \\
    \hline
  \end{tabular}
\end{table}

Table \ref{tab:afrikaans-demographics}, from \textcite{GilElph1979}, gives an overview of the population of the Dutch Cape Colony from 1670 to 1820, the period relevant for the emergence of Afrikaans. Although it is evident from these figures that slaves, freed slaves and the indigenous peoples represented a significant fraction of the population at each stage, translating these data into figures of the likely proportion of L2 learners at the different time points is non-trivial. First, no data exist for the native population before 1798. Secondly, not all of the non-European population would be L2 learners: perhaps some of them would not have needed to learn Dutch, but even more importantly, at some point part of this population will have begun to acquire Dutch as an L1, or at least as a bilingual L2 in childhood. For these reasons, I have computed fairly wide interval estimates for $\sigma$, the proportion of (adult) L2 speakers in the colony, given in the rightmost column of Table \ref{tab:afrikaans-demographics}. The left endpoint of each interval represents the conservative estimate that half of the non-European population would have spoken Dutch as an L2; the right endpoint gives the absolute maximum, on the (quite certainly unrealistic) assumption that the entire non-European population spoke Dutch as an L2.

Population figures for Colonial Peru are hard to come by; however, useful data exists for Lima in the early period, specifically the first half of the 17th century. Interpretation of the demographics in Table \ref{tab:spanish-demographics}, from \textcite{Bow1974}, is complicated by the fact that the various sources from which the demographic data are drawn employ different classificatory principles, sometimes pooling Spaniards and people of mixed European--American background, or people of African and people of mixed African--European background, together. \textcite[93]{SesBook} argues that people of mixed background were most probably bilingual in Spanish from childhood (as one of their parents would have been Spanish). They should therefore be exempted from any estimates of the proportion of adult L2 learners in the population. The interval estimates given in Table \ref{tab:spanish-demographics} were computed with the assumption that only the Black and Indigenous groups would contribute adult L2 learners to the population; the left endpoints of these intervals again correspond to the conservative assumption that 50\% of these people spoke Spanish as L2, while the right endpoints correspond to the upper bound assumption that 100\% of them did.\footnote{To compute the estimates for the year 1600, for which Bowser's (\citeyear{Bow1974}) data pools people of African and mixed African--European heritage together, I have used the overall fraction of African to African--European for the remaining years ($0.92$). This yields an estimate of 6,091 African and 530 mixed African--European in the year 1600.} The resulting estimates of $\sigma$ suggest a steadily, if slowly growing proportion of L2 speakers in the population, with figures roughly similar to those found in the Dutch Cape Colony.

\begin{table}
  \caption{Demographics of Lima, 1600--1636 \autocite[339--341]{Bow1974}, together with estimated range for the fraction of people speaking Spanish as L2, $\sigma$ (see text). Mixed AmE $=$ mixed American--European background; Mixed AfE $=$ mixed African--European background.}\label{tab:spanish-demographics}
  \centering
  \begin{tabular}{|c|c|c|c|c|c|c|}
    \hline
    Year & Spanish & Mixed AmE & Black & Mixed AfE & Indigenous & Prop.~L2 ($\sigma$) \\
    \hline
    1600 & 7,193 & --- & \multicolumn{2}{c|}{6,621} & 438 & $[0.23, 0.46]$ \\
    \hline
    1614 & 11,867 & 192 & 10,386 & 744 & 1,978 & $[0.25, 0.49]$ \\
    \hline
    1619 & \multicolumn{2}{c|}{9,706} & 11,997 & 1,166 & 1,406 & $[0.28, 0.55]$ \\
    \hline
    1636 & 10,758 & 377 & 13,620 & 861 & 1,426 & $[0.28, 0.56]$ \\
    \hline
  \end{tabular}
\end{table}

I am not aware of useful demographic data for later stages of Colonial Peru; a census was carried out in 1725--1740, but it contains no information on slaves \autocite{Pea2001}. Useful information is available, however, on the population dynamics of the slave population on individual plantations or haciendas in the later period; while such data can tell us little about the relative proportion of L2 speakers in the population, they are useful in illuminating further developments following the initial stages of colonization. Drawing upon data in \textcite{Cus1980}, \textcite[107]{SesBook} reports an estimated yearly net growth of $1.4$ slaves per hacienda for the period 1710--1767; yet at the same time, the yearly natural growth (estimated from records of slaves' births and deaths, which were kept on haciendas run by Jesuits), is negative at $-2.7$ per hacienda. In other words, an average of $4.1$ slaves must have been imported yearly, per hacienda, in this period to match the net growth rate. It follows that, as late as the second half of the 18th century, there must have been a steady influx of new slaves, and hence of potential adult L2 learners of Spanish, into these regions.

\subsection{Predictions: Afrikaans}
\label{sec:predictions-afrikaans}

The main linguistic features of Afrikaans are estimated (based on extant written records) to have been in place by the end of the 18th century \autocite[83]{Rob2002}. Since the Dutch founded their permanent colony in 1652 \autocite{Gue1979}, this leaves around 150 years for the development of Afrikaans as a language separate from Dutch. Assuming one generation to correspond to roughly 30 years \autocite{TreVez2000}, the development is thus estimated to have occurred over about five generations of speakers. More precisely, five generations supplies an upper bound for the development; the linguistic change may have happened faster, too, but this is impossible to determine given the scant textual record in the earlier phases.

In the well-mixing, infinite-population setup adopted in Section \ref{sec:deterministic-limit}, the fact that the grammatical advantages are in balance in this case ($\alpha = 1$) means that, in theory, even one adult L2 learner would suffice to drive the L2-difficult grammar to extinction, given enough time, irrespective of the absolute size of the population. The crucial question then concerns the time scale of the development: for fixed grammatical advantage ratio $\alpha$ and L2-difficulty $d$, the proportion of L2 speakers $\sigma$ directly controls the time to extinction of $G_1$, i.e.~how long it takes for the population to converge to the attractor $(p,q) = (0,0)$ from some initial state $(p_0, q_0) = (1, q_0)$. (The initial probability in the L2 population, $q_0$, is of course an unknown.) Since the strength of L2-difficulty $d$ is also unknown, the best one can do is to compute these times-to-extinction for several combinations of the parameter values, thereby hopefully establishing at least a subset of the parameter space that predicts the empirically observed facts.

Iterating the pair of equations \eqref{eq:the-system} repeatedly, one can find the number of generations it takes for the population to converge to the attractor $(p, q) = (0,0)$ from various initial conditions $(p_0,q_0) = (1, q_0)$, for various selections of $d$ and for $\sigma = 0.2$ and $\sigma = 0.6$ (roughly corresponding to the endpoints of the interval estimates in Table \ref{tab:afrikaans-demographics}). Convergence to the attractor is here defined, somewhat arbitrarily, as the values of both $p$ and $q$ being below $0.001$ ($0.1\%$ use of $G_1$).

\begin{figure}[t]
  \centering
  \includegraphics[width=1.0\textwidth]{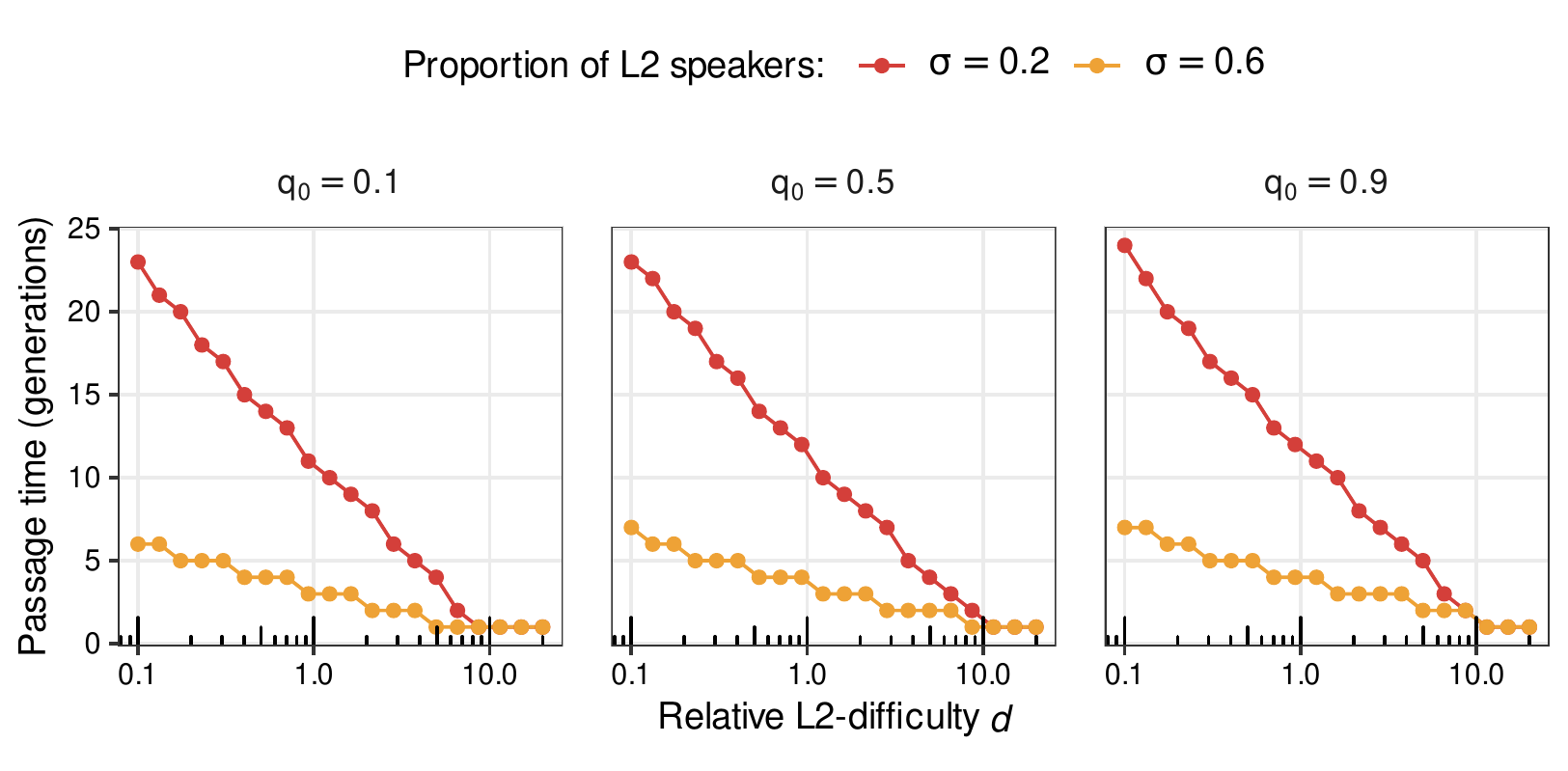}
  \caption{Time of convergence to attractor in the Afrikaans case study, for various values of L2-difficulty $d$, proportion of L2 speakers $\sigma$ and initial probability of L2-difficult grammar in the L2 speaker population $q_0$.}\label{fig:passage-times}
\end{figure}

Figure \ref{fig:passage-times} plots the passage times found using this procedure. From these results, it is clear that convergence to the attractor in 5 generations is possible. For proportions of L2 speakers on the order of $\sigma = 0.6$, a strength of L2-difficulty greater than about $d = 0.5$ is sufficient; for lower proportions on the order of $\sigma = 0.2$, L2-difficulties in excess of about $d = 5$ guarantee convergence in up to 5 generations. In future work, it would be important to estimate plausible values of $d$ using independent evidence, so that model fit can be evaluated in a more principled manner (subject to fewer researcher degrees of freedom). Having said that, the above demonstration shows that the model makes the development of Afrikaans \emph{possible} (even if not \emph{necessary}).

\subsection{Predictions: Afro-Peruvian Spanish}

In the case of Afro-Peruvian Spanish, the situation is radically different: in this case, it needs to be explained why the null subject grammar was never completely overthrown by the corresponding grammar without null subjects, which would be favoured by L2 learning. Recall that the critical value of the bifurcation parameter $\sigma$, the proportion of adult L2 learners in the population, was found in Section \ref{sec:population-dynamics} to be
\begin{equation}
  \sigma_{\textnormal{crit}} = \frac{(\alpha - 1)(D + 1)}{\alpha D},
\end{equation}
where $\alpha = \alpha_1/\alpha_2$ is the ratio of the advantages of the two grammars and $D = d / \alpha_2$ represents the relative L2-difficulty of $G_1$, scaled by the advantage of $G_2$. The lower bound of $\sigma_{\textnormal{crit}}$ occurs as $D \to \infty$, namely
\begin{equation}
  \sigma^{\infty}_{\textnormal{crit}} = \frac{\alpha - 1}{\alpha}.
\end{equation}
On the other hand, in Section \ref{sec:calibrating-advantages} the value $\alpha = 14$ was estimated in this particular case. This corresponds to a $\sigma^{\infty}_{\textnormal{crit}}$ of $13/14 \approx 0.93$. In other words, the proportion of L2 speakers in the population would need to be at least about $0.93$---and possibly higher, if $D$ turned out to have a small value---for the grammatical advantage enjoyed by the L2-difficult grammar $G_1$ to be overcome and for the origin $(p,q) = (0,0)$ to be an attractor. Since based on the available demographic data such a high proportion of L2 speakers never obtained in Colonial Peru (Section \ref{sec:calibrating-demographics}), we infer that null subjects were never about to be completely lost in the Afro-Peruvian variety of Spanish.

For any given non-zero $\sigma$, however, a stable, attracting state is implied in the interior of the state space (see again Section \ref{sec:deterministic-limit}). This would appear to correspond, qualitatively, to the linguistic classification of Afro-Peruvian Spanish as a mixed NS language (Section \ref{sec:afro-peruvian-spanish})---that is, as a variety sometimes employing, sometimes not employing, null subjects. It is worth pointing out, however, that as the proportion of L2 speakers decreases over time, this interior equilibrium tends to a stable rest point on the boundary of the state space, crucially with the property $p = 1$ (so that L1 speakers have reverted to using null subjects all the time). This may help to explain the reported instability of the mixed NS status of Afro-Peruvian Spanish---the observation that the younger speakers in these communities are turning towards the standard Spanish grammar, with full null subjects \autocite{Ses2014}. Although \textcite{Ses2014} attributes this development mostly to sociolinguistic causes---to the prestige enjoyed by standard coastal Peruvian Spanish---the above considerations suggest an alternative, or at least complementary analysis: the loss of overt subjects results from the fact that there are fewer L2 speakers of the variety in the speech community, leading to fewer constructions totally lacking null subjects in the input data based on which L1 learners acquire their variety.

\section{Conclusion and outlook}
\label{sec:concluding-remarks}

This paper has presented a mathematical model of contact-induced linguistic changes in which adult L2 learning plays a leading role. Fairly lenient assumptions about language learning were used to characterize the terminal state of learners exposed to a given learning environment; this terminal state was then used to derive a model of the population dynamics of a mixed population of L1 and L2 speakers. Focusing on a deterministic approximation of the full stochastic model allowed us to derive formal results about the equilibria and bifurcations of the system in relation to three main model parameters: $\alpha$, a measure of how much grammatical advantage the L2-difficult grammar $G_1$ has over its competitor $G_2$; $D$, a measure of the strength of the L2-difficulty of $G_1$; and $\sigma$, the proportion of L2 speakers present in the population. It was shown that the system has either one or two equilibria: either the point $(p,q) = (0,0)$, at which both the L1 and L2 populations speak $G_2$ to the complete exclusion of $G_1$, is asymptotically stable; or else this point is unstable and another asymptotically stable rest point exists in the state space. Passage from one of these phases to the other is controlled by the bifurcation parameter $\sigma$ with critical value
\begin{equation}
  \sigma_{\textnormal{crit}} = \frac{(\alpha - 1)(D + 1)}{\alpha D}
\end{equation}
whenever $\alpha < D + 2$. The origin $(p,q) = (0,0)$ is the only equilibrium when $\sigma > \sigma_{\textnormal{crit}}$.

Estimating empirical values of quantities such as grammatical advantages is now routine in variationist approaches to linguistic history \autocite{Yan2000,HeyWal2013,Dan2017,SimEtAl2019}. Estimation of demographic parameters such as the proportion of L2 speakers is also straightforward in principle (though by no means necessarily so in practice). By contrast, estimating the L2-difficulty suffered by a grammar is less trivial. Future work could potentially attempt to infer such parameters either from longitudinal data on L2 learning trajectories or from the terminal states attained by a larger sample of learners. In many cases, estimation of all three parameters may be unnecessary, however. As $D$ tends to infinity, $\sigma_{\textnormal{crit}}$ tends to $(\alpha - 1)/\alpha$. This supplies a lower bound for the critical value of the bifurcation parameter. Thus, if in a given case it is possible to show that the empirical value of $\sigma$ never exceeded this lower bound, then the prediction from the modelling is that the L2-difficult grammar should coexist with its competitor in the speech community. Conversely, as $D$ approaches $0$, $\sigma_{\textnormal{crit}}$ grows without bound. However, since $D = d/\alpha_2$ and $0 < \alpha_2 < 1$, it is often possible to argue that $D > 1$ as long as the unscaled L2-difficulty $d = \delta/\gamma$ can reasonably be assumed to be more than unity. This, then, provides an upper bound on the critical value of $\sigma$ (evaluated at $D=1$), namely $2(\alpha - 1)/\alpha$. If it is possible to show that the empirical value of $\sigma$ exceeds this upper bound, then the prediction from the modelling is that the L2-difficult grammar will be driven to complete extinction by the contact situation, given enough time.

The above results provide modelling-based evidence for the feasibility of the hypothesis of contact-induced change caused by imperfect L2 learning followed by L1 nativization \parencite{Wee1993,Tru2004,Tru2010,Tru2011}: it is possible for features to be lost from a language as a consequence of L2-difficulty. However, the results simultaneously suggest that extra-linguistic, population-dynamic parameters such as $\sigma$ may stand in complex relationships to structural, linguistic parameters such as $\alpha$ and parameters to do with the psychology of learning such as $d$. Thus it would be unrealistic to expect to find a simple and constant threshold of the proportion of L2 learners sufficient and necessary to cause change in the overall population. Nor is it realistic to expect all linguistic features or variables to respond identically to identical population-dynamic situations. The modelling results here suggest that such a view would be too simplistic. In fact, there may be empirical configurations of key system parameters which predict that adult L2 learning is insufficient to drive certain features to extinction, in certain population settings. This seems likely, for instance, for null subjects in cases such as that of Afro-Peruvian Spanish. On the other hand, in the case of variables where little to no advantage is enjoyed by either competing variant for purely linguistic reasons (such as verbal inflection in non-NS languages), those variables may be very sensitive to external factors such as L2 learning, with fairly low values of $\sigma$ sometimes sufficing to set the language on the course to lose the feature.

This paper has discussed one potential mechanism of contact-induced change. It assumes that L2 learners introduce a ``linguistic mutation'' to a speech community; this mutation then spreads as the primary linguistic data of L1 learners changes. Changes are cumulative and take place, typically, over extended inter-generational timescales. It is important to note that this is, however, only one possible mechanism that may generate the types of diachronic developments here studied. An alternative view holds that changes propagate as L1 speakers accommodate to the language use of L2 speakers in an act of foreigner-talk \parencite{Val1981,AtkSmiKir2018}. It is likely, in fact, that both mechanisms are at play at least to some extent in the real world. What is clear in any case is that the primary linguistic data of L1 learners will change regardless of the specific mechanism; it is this shift in input which, ultimately, secures propagation of innovations.

Future modelling work can, naturally, look into the effects of introducing more intricate mechanisms of skewed input. In a similar vein, it would be desirable in future work to model other aspects of L2 acquisition in more detail. In the present paper, universal L2-difficulty is the driving force of innovation. This difficulty is assumed to apply to all L2 learners in a similar way, regardless of the learner's L1. Although evidence for such a universal psychological difficulty was cited above, the existence of transfer effects in L2 acquisition is also well-documented in the literature \parencite{SchSpr1996}. In extensions of modelling work of this kind, such effects may be studied alongside universal biases. A related point is that, in the current framework, L1 and L2 acquisition are identical up to the effect of L2-difficulty; there is, for instance, no way in which L2 learners might sometimes outperform L1 learners (``positive transfer''). Such extensions could be considered in future research.

At the population-dynamic level, future work should explore in detail the effects of modelling the stochastic dynamics of interacting populations explicitly---beyond the proof-of-concept simulations reported in Section \ref{sec:simulations}. The great benefit of studying the deterministic limit of the model, in abstraction of stochastic fluctuations, is that strong analytical results become available. The bifurcation threshold identified in this paper is one such result: for any given combination of model parameters---proportion of L2 learners, extent of L2-difficulty, and advantage ratio---the model predicts either complete change or only partial change or no change at all. This increases the falsifiability of our theory, as all parameters can in principle be estimated from data and prediction compared against historical outcome. Should it turn out that the model makes the wrong predictions, some of its assumptions can be modified. For this reason, acquisition of further empirical test cases is an important desideratum. Although much current work into the role of demographic factors in language change is typological, synchronic and correlational in nature, diachronic data are essential in evaluating mechanistic models of change.

The results of this paper also add to a growing body of literature illustrating the utility of the Variational Learning model of linguistic variation and change. In the future, it will be important to move in the direction of more direct tests of key model ingredients, such as the learning rate parameter. Equally important is the balanced consideration and development of alternative formal models and, once the empirical predictions of each model have been figured out, the rigorous statistical comparison of competing models vis-à-vis empirical data.

\section*{Abbreviations}\label{abbrev}

AHLA = Afro-Hispanic Language of the Americas, L1 = first language, L2 = second language, NS = null subject, VL = variational learning

\section*{Data and code accessibility}

The programs required for replicating the simulations described in this paper can be obtained from \url{https://doi.org/10.5281/zenodo.7004002}.

\section*{Funding information}

The major part of the research here reported was funded by the European Research Council as part of project STARFISH (851423). The work was begun during the author's fellowship at the Zukunftskolleg of the University of Konstanz, funded by the Federal Ministry of Education and Research (BMBF) and the Baden-Württemberg Ministry of Science as part of the Excellence Strategy of the German Federal and State Governments. The Zukunftskolleg additionally provided funding for computing equipment. All this support is gratefully acknowledged.

\section*{Acknowledgements}

I am indebted to Fernanda Barrientos, Sarah Einhaus, Gemma McCarley, Raquel Montero, Molly Rolf, Joel Wallenberg and George Walkden for numerous discussions which have influenced my thinking on language learning, language change and population dynamics generally, as well as for specific comments pertaining to manuscript versions of the present work. I also wish to thank four anonymous reviewers whose comments helped to sharpen various aspects of the paper. Any remaining errors are, naturally, my sole responsibility.

\section*{Competing interests}

The author has no competing interests to declare.

\printbibliography

\eject
\begin{center}
  {\large\textsc{appendix}}
\end{center}

\appendix

\section{Learning: definitions}

Following \textcite{BusMos1955}, suppose the learner has $n$ possible actions (here, $n$ possible grammars) $G_1, \dots , G_n$ at their disposal and uses the $i$th action with probability $p_i$, so that the learner's knowledge is represented by the probability vector $\mathbf{p} = (p_1, \dots , p_n)$. Once the learner has chosen an action and acted on the environment, the latter responds with one of $m$ possible responses $R_1, \dots , R_m$. The probability of the $j$th response occurring, given that the learner chose the $i$th action, will be denoted $\omega_{ij}$. The set of these probabilities defines the (stationary random) learning environment; we require, of course, that $\sum_j \omega_{ij} = 1$ for all $i$. Suppose the learner chose $G_i$ and that the environment responded with $R_j$. Having observed this, the learner adjusts the vector $\mathbf{p}$ by applying an operator $f_{ij}$ which, in the general case, is only required to be some mapping $f_{ij} : \Delta^{n-1} \to \Delta^{n-1}$ from the simplex
\begin{equation}
  \Delta^{n-1} = \{\mathbf{p} = (p_1, \dots , p_n) \in [0,1]^n : \textstyle\sum_i p_i = 1 \}
\end{equation}
to itself. The process is then repeated: the next time, the learner chooses an action by drawing from the distribution $f_{ij}(\mathbf{p})$, and some $f_{k\ell}$ is applied to this vector to yield $f_{k\ell}(f_{ij}(\mathbf{p}))$, and so on.

Many of the formal properties of this general framework are understood in detail for various choices of operators $f_{ij}$, both for stationary (constant $\omega_{ij}$) and non-stationary (time-dependent $\omega_{ij}$) environments \parencite{BusMos1955,NarTha1989}. In what follows, I will utilize a special case in which the learner has two actions, the environment is stationary and has two responses, and the operators $f_{ij}$ are linear. The assumptions of stationarity and linearity facilitate characterization of the asymptotic distribution of a population of learners and turn out to give rise to tractable (although nonlinear) dynamics across generations of learners. The restriction to $n=2$ actions is not necessary, but is made to keep the presentation manageable and also in view of the empirical application, which concerns this special case. The two possible environmental responses will be interpreted as reward and punishment, in a sense to be made precise later.

Thus let $\mathbf{p} = (p_1, p_2) = (p, 1-p)$. From now on, I will interpret the variables in a linguistic setting and identify $p_1 = p$ with the probability of grammar $G_1$ and $p_2 = 1-p$ with the probability of grammar $G_2$. I assume in all that follows that grammar $G_1$ incurs some amount of L2-difficulty, while $G_2$ is not subject to such an inherent bias. In other words, adult L2 learners are expected to struggle more in acquiring $G_1$ than in acquiring $G_2$. 

Assuming the learner is making a binary choice between two grammars and that the environment signals two different responses only, four operators $f_{ij}$ need to be specified. Assuming further that these operators are linear in $\textbf{p} = (p_1, p_2) = (p, 1-p)$, we have
\begin{equation}
  f_{ij}(\textbf{p}) = M^{(ij)}\textbf{p}
  = \begin{bmatrix}
    m^{(ij)}_{11} & m^{(ij)}_{12} \\
    m^{(ij)}_{21} & m^{(ij)}_{22}
  \end{bmatrix}
  \begin{bmatrix}
    p_1 \\
    p_2
  \end{bmatrix}
  = \begin{bmatrix}
    m^{(ij)}_{11}p_1 + m^{(ij)}_{12}p_2 \\
    m^{(ij)}_{21}p_1 + m^{(ij)}_{22}p_2
  \end{bmatrix}
\end{equation}
for some matrix $M^{(ij)}$ with suitable constants $m^{(ij)}_{k\ell}$ chosen so that $M^{(ij)}\textbf{p} \in \Delta^1$. In this one-dimensional case we can, of course, drop $p_2$ and work with functions that operate on the scalar $p_1 = p$:
\begin{equation}
  f_{ij}(p) = m^{(ij)}_{11} p + m^{(ij)}_{12} (1-p)
  = (m^{(ij)}_{11} - m^{(ij)}_{12}) p + m^{(ij)}_{12}
\end{equation}
or, what amounts to the same thing,
\begin{equation}
  \label{appeq:linear-operators}
  f_{ij}(p) = a_{ij}p + b_{ij}
\end{equation}
for constants $a_{ij}$, $b_{ij}$ ($i,j=1,2$). These operators are thus affine in $p$.

A classical choice is to interpret response $R_1$ as reward and response $R_2$ as punishment, and to set
\begin{equation}
  \label{appeq:LRP}
  \left.
    \begin{aligned}
      f_{11}(p) &= (1 - \gamma ) p + \gamma \\
      f_{12}(p) &= (1 - \gamma ) p \\
      f_{21}(p) &= (1 - \gamma ) p \\
      f_{22}(p) &= (1 - \gamma ) p + \gamma
    \end{aligned}
      \right\}
  \quad\textnormal{i.e.}\quad
  \left.
    \begin{aligned}
      f_{11}(p) &= p + \gamma (1-p) \\
      f_{12}(p) &= p - \gamma p \\
      f_{21}(p) &= p - \gamma p \\
      f_{22}(p) &= p + \gamma (1-p)
    \end{aligned}
  \right\}
\end{equation}
for constant $0 < \gamma < 1$. In other words, the value of $p$ is augmented whenever grammar $G_1$ is rewarded or $G_2$ is punished, and decreased otherwise, as is evident from the representation on the right in \eqref{appeq:LRP}. The parameter $\gamma$, which governs how large or small modifications to $p$ the learner makes in response to the environment's responses, can be interpreted as a learning rate.

The operators in \eqref{appeq:LRP} constitute the one-dimensional linear reward--penalty scheme of \textcite{BusMos1955}, first applied to linguistic problems by \textcite{Yan2000}. To extend this model to cater for adult L2 acquisition, I now assume that adult L2 learners employ the same general learning strategy but are, additionally, subject to a bias which discounts grammars that are L2-difficult. As mentioned above, I take $G_1$ to refer to the grammar that incurs L2-difficulty, and assume $G_2$ not to be targeted by a similar bias. A simple extension of \eqref{appeq:LRP} is then the following, for secondary learning rate parameter $\delta$:
\begin{equation}
  \label{appeq:L2LRP}
  \left.
    \begin{aligned}
      f_{11}(p) &= (1 - \gamma - \delta ) p + \gamma \\
      f_{12}(p) &= (1 - \gamma - \delta ) p \\
      f_{21}(p) &= (1 - \gamma - \delta ) p \\
      f_{22}(p) &= (1 - \gamma - \delta ) p + \gamma
    \end{aligned}
      \right\}
  \quad\textnormal{i.e.}\quad
  \left.
    \begin{aligned}
      f_{11}(p) &= p + \gamma (1-p) - \delta p \\
      f_{12}(p) &= p - \gamma p - \delta p \\
      f_{21}(p) &= p - \gamma p - \delta p \\
      f_{22}(p) &= p + \gamma (1-p) - \delta p
    \end{aligned}
  \right\}.
\end{equation}
As is evident from the forms on the right, the additional term $-\delta p$ (with positive $\delta$) constitutes a negative bias experienced by grammar $G_1$, regardless of the environment's response. We require $0 \leq \delta \leq 1 - \gamma$ to guarantee $p$ always remains in the interval $[0,1]$.

\section{Learning: asymptotic results}

Learning, under this operationalization, is a stochastic process. In particular, it is (in practice) impossible to predict the exact evolution of $p$ given an initial state $p_0$. However, an explicit recursive solution exists for all moments of the distribution of $p$:
\begin{lemma}[Bush \& Mosteller \protect\citeyear{BusMos1955}:~98]
  \label{thm:bush-mosteller}
  For any linear operators of the form \eqref{appeq:linear-operators} in a stationary random environment, the following recursion holds for the $m$th moment of $p$, $\langle p^m \rangle$:
  \begin{equation}
    \label{appeq:moment-recursion}
    \langle p^m \rangle_{n+1} = \sum_{k=0}^m {m \choose k} \left( \Omega'_{m,k} \langle p^k \rangle_n + (\Omega_{m,k} - \Omega'_{m,k}) \langle p^{k+1} \rangle_n \right),
  \end{equation}
  where
  \begin{equation}
    \left.
      \begin{aligned}
        \Omega_{m,k} &= a_{11}^k b_{11}^{m-k} \omega_{11} + a_{12}^k b_{12}^{m-k} \omega_{12} \\
        \Omega'_{m,k} &= a_{21}^k b_{21}^{m-k} \omega_{21} + a_{22}^k b_{22}^{m-k} \omega_{22}
      \end{aligned}
          \right\}.
  \end{equation}
\end{lemma}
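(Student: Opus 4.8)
The plan is to derive the recursion by a one-step conditioning argument: compute $\langle p^m\rangle_{n+1}$ given the current state $p_n=p$ by averaging over the four possible (action, response) pairs at that step, then average over the distribution of $p_n$ using linearity of expectation. The key observation is that, in state $p$, the learner selects $G_1$ with probability $p$ and $G_2$ with probability $1-p$, and conditionally on selecting $G_i$ the stationary environment returns $R_j$ with probability $\omega_{ij}$; hence the pair $(G_i, R_j)$ occurs with probability $p\,\omega_{1j}$ when $i=1$ and $(1-p)\,\omega_{2j}$ when $i=2$, and in that event $p$ is updated to $a_{ij}p+b_{ij}$ by the corresponding operator from \eqref{appeq:linear-operators}.

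First I would write the conditional expectation $\mathbb{E}\!\left[p_{n+1}^m \mid p_n = p\right]$ as the sum over the four events of each event's probability times $(a_{ij}p+b_{ij})^m$. Then I would expand each power via the binomial theorem, $(a_{ij}p+b_{ij})^m = \sum_{k=0}^m \binom{m}{k} a_{ij}^k b_{ij}^{m-k}\,p^k$, and interchange the two finite sums. Collecting the $i=1$ contributions and the $i=2$ contributions separately yields exactly the quantities $\Omega_{m,k}$ and $\Omega'_{m,k}$ from the statement, so the conditional expectation takes the form $\sum_{k=0}^m \binom{m}{k}\,p^k\bigl(p\,\Omega_{m,k} + (1-p)\,\Omega'_{m,k}\bigr)$.

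The final algebraic step is to rewrite $p\,\Omega_{m,k} + (1-p)\,\Omega'_{m,k} = \Omega'_{m,k} + p\,(\Omega_{m,k}-\Omega'_{m,k})$, which turns the conditional expectation into $\sum_{k=0}^m \binom{m}{k}\bigl(\Omega'_{m,k}\,p^k + (\Omega_{m,k}-\Omega'_{m,k})\,p^{k+1}\bigr)$. Since this is a polynomial in $p_n$, I would then take the expectation over $p_n$ and pass it through the sum by linearity, replacing $p^k$ by $\langle p^k\rangle_n$ and $p^{k+1}$ by $\langle p^{k+1}\rangle_n$; this gives \eqref{appeq:moment-recursion}.

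There is no genuine obstacle: the proof is a direct computation. The only places that need care are the bookkeeping of the four event probabilities---keeping the learner's choice probabilities $p$ and $1-p$ separate from the environmental response probabilities $\omega_{ij}$, and invoking stationarity so that the $\omega_{ij}$ do not depend on $n$---together with the small rearrangement $p\,\Omega + (1-p)\,\Omega' = \Omega' + p(\Omega - \Omega')$ that produces the specific coefficient structure appearing in the stated recursion.
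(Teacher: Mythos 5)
Your proof is correct. The paper does not prove this lemma at all---it imports it directly from Bush \& Mosteller (1955, p.~98) with a citation---so there is no in-paper argument to compare against; your derivation (condition on $p_n$, sum over the four action--response events with probabilities $p\,\omega_{1j}$ and $(1-p)\,\omega_{2j}$, expand $(a_{ij}p+b_{ij})^m$ binomially, regroup via $p\,\Omega_{m,k}+(1-p)\,\Omega'_{m,k}=\Omega'_{m,k}+p(\Omega_{m,k}-\Omega'_{m,k})$, and take the outer expectation by linearity) is the standard one and correctly reproduces the stated recursion, including the coefficient structure that couples the $m$th moment to the $(m+1)$th via the $\langle p^{k+1}\rangle_n$ terms.
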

\noindent It will be useful to define the following averages (for $i=1,2$):
\begin{equation}
  \label{appeq:averages}
  \left.
    \begin{aligned}
      \overline{a}_i &= a_{i1}\omega_{i1} + a_{i2}\omega_{i2} \\
      \overline{b}_i &= b_{i1}\omega_{i1} + b_{i2}\omega_{i2} \\
      \overline{\overline{a}}_i &= a^2_{i1}\omega_{i1} + a^2_{i2}\omega_{i2} \\
      \overline{\overline{b}}_i &= b^2_{i1}\omega_{i1} + b^2_{i2}\omega_{i2} \\
      \overline{ab}_i &= a_{i1}b_{i1}\omega_{i1} + a_{i2}b_{i2}\omega_{i2}
    \end{aligned}
      \right\}.
\end{equation}
It is then a straightforward algebraic exercise to derive the following results concerning the first two raw moments from Lemma \ref{thm:bush-mosteller}:
  \begin{equation}
    \label{appeq:mean-recursion}
    \left.
      \begin{aligned}
        \langle p \rangle_{n+1} &= \overline{b}_2 + (\overline{b}_1 - \overline{b}_2 + \overline{a}_2) \langle p \rangle_n + (\overline{a}_1 - \overline{a}_2) \langle p^2 \rangle_n \\
        \langle p^2 \rangle_{n+1} &= \overline{\overline{b}} 
        + (\overline{\overline{b}}_1 - \overline{\overline{b}}_2 + 2\overline{ab}_2) \langle p \rangle_n 
        + (2\overline{ab}_1 - 2\overline{ab}_2 + \overline{\overline{a}}_2) \langle p^2 \rangle_n
        + (\overline{\overline{a}}_1 - \overline{\overline{a}}_2) \langle p^3 \rangle_n
      \end{aligned}
          \right\}.
  \end{equation}

Thus, in the general case, the $m$th moment depends on the $(m+1)$th moment. This problematic upward dependence disappears, however, if $\overline{a}_1 - \overline{a}_2 = 0$ and $\overline{\overline{a}}_1 - \overline{\overline{a}}_2 = 0$. Using the fact that $\omega_{i1} + \omega_{i2} = 1$, it is easy to check that this is the case if $a_{ij} = a$ for some common $a$, that is to say, if the slopes of the four affine operators are identical. This is obviously the case with both the classical linear reward--penalty scheme \eqref{appeq:LRP} as well as its extension to L2 learning \eqref{appeq:L2LRP}. For the mean, we then have
\begin{equation}
  \langle p \rangle_{n+1} = C_0 + C_1 \langle p \rangle_n
\end{equation}
with $C_0 = \overline{b}_2$ and $C_1 = \overline{b}_1 - \overline{b}_2 + \overline{a}_2$, a simple linear difference equation with solution
\begin{equation}
  \label{appeq:mean-solution}
  \langle p \rangle_n = C_1^n \langle p \rangle_0 + (1 - C_1^n) \langle p \rangle_{\infty}
\end{equation}
where
\begin{equation}
  \langle p \rangle_{\infty} = \frac{C_0}{1 - C_1}
\end{equation}
is the limit at $n \to \infty$ as long as $|C_1| < 1$. The latter inequality is easy to verify for algorithms \eqref{appeq:LRP} and \eqref{appeq:L2LRP} as long as the environment satisfies $0 < \omega_{ii} < 1$, which we can assume without loss of generality.
\begin{lemma}
  Assume the learning environment satisfies $0 < \omega_{ii} < 1$ for $i=1,2$. Then $|C_1| < 1$ for both \eqref{appeq:LRP} and \eqref{appeq:L2LRP}.
\end{lemma}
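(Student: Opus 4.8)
The plan is a direct computation of $C_1 = \overline{b}_1 - \overline{b}_2 + \overline{a}_2$ from the explicit operators, followed by an elementary estimate. First I would read off the affine coefficients in the notation $f_{ij}(p) = a_{ij}p + b_{ij}$. For \eqref{appeq:L2LRP} one has $a_{ij} = 1 - \gamma - \delta$ for all $i,j$ (the common-slope property already used to collapse the moment recursion), together with $b_{11} = b_{22} = \gamma$ and $b_{12} = b_{21} = 0$. Since \eqref{appeq:LRP} is the special case $\delta = 0$, it suffices to treat \eqref{appeq:L2LRP}. Substituting into the averages \eqref{appeq:averages} and using $\omega_{i1} + \omega_{i2} = 1$ gives $\overline{a}_2 = 1 - \gamma - \delta$, $\overline{b}_1 = \gamma\omega_{11}$, $\overline{b}_2 = \gamma\omega_{22}$, hence
\begin{equation*}
  C_1 = \gamma\omega_{11} - \gamma\omega_{22} + 1 - \gamma - \delta = 1 - \delta - \gamma(1 - \omega_{11} + \omega_{22}).
\end{equation*}

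Next I would set $t := 1 - \omega_{11} + \omega_{22}$ and note that the hypothesis $0 < \omega_{ii} < 1$ forces $t \in (0,2)$, since $1 - \omega_{11} \in (0,1)$ and $\omega_{22} \in (0,1)$. Thus $C_1 = 1 - \delta - \gamma t$ with $\delta \in [0, 1-\gamma]$ and $\gamma \in (0,1)$. The upper bound is immediate: $\gamma t > 0$ and $\delta \geq 0$ give $C_1 < 1$. For the lower bound I would estimate $\delta + \gamma t$ from above, using $\delta \leq 1 - \gamma$ and $t < 2$:
\begin{equation*}
  \delta + \gamma t \leq (1 - \gamma) + \gamma t = 1 + \gamma(t - 1) < 2,
\end{equation*}
because $\gamma(t-1) < \gamma < 1$ when $t > 1$ and $\gamma(t-1) \leq 0 < 1$ when $t \leq 1$. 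Hence $C_1 = 1 - (\delta + \gamma t) > -1$, and combining the two bounds yields $|C_1| < 1$.

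There is no genuine obstacle here; the argument is pure bookkeeping. The only points requiring care are (i) that the common-slope property of \eqref{appeq:LRP}/\eqref{appeq:L2LRP} makes $\overline{a}_i$ independent of the environment, which is exactly what lets $C_1$ depend on the $\omega_{ij}$ only through the single combination $t$; and (ii) that the two structural constraints imposed earlier --- the strict inequality $0 < \omega_{ii} < 1$ (ruling out degenerate absorbing environments) and $0 \leq \delta \leq 1 - \gamma$ (already needed to keep the operators $[0,1]$-valued) --- are precisely the hypotheses that produce the \emph{strict} inequality $-1 < C_1 < 1$, and hence the geometric convergence $\langle p \rangle_n \to \langle p \rangle_\infty$ at rate $|C_1|^n$ asserted in \eqref{appeq:mean-solution}.
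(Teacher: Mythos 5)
Your proof is correct and takes essentially the same route as the paper: compute $C_1 = 1 - \delta - \gamma(1 - \omega_{11} + \omega_{22})$ explicitly from the operator coefficients, then bound it using $0 \leq \delta \leq 1 - \gamma$ and the strict inequalities $0 < \omega_{ii} < 1$. Your substitution $t = 1 - \omega_{11} + \omega_{22} \in (0,2)$ is a minor repackaging of the paper's estimate $-1 < \omega_{11} - \omega_{22} < 1$; the content is identical.
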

\begin{proof}
The constant $C_1$ has been defined as
\begin{equation}
  C_1 = \overline{b}_1 - \overline{b}_2 + \overline{a}_2.
\end{equation}
For algorithm \eqref{appeq:L2LRP} this becomes, using the definitions in \eqref{appeq:averages},
\begin{equation}
  C_1 = (\omega_{11} - \omega_{22})\gamma - \gamma + 1 - \delta.
\end{equation}
The learning rate parameters are assumed to satisfy $0 < \gamma < 1$ and $0 \leq \delta \leq 1 - \gamma$, which implies $1 - \delta \geq \gamma$. Hence
\begin{equation}
  C_1 \geq(\omega_{11} - \omega_{22}) \gamma.
\end{equation}
Since $\omega_{11}$ and $\omega_{22}$ are probabilities and we furthermore assume they belong to the open interval $]0,1[$, their difference satisfies $-1 < \omega_{11} - \omega_{22} < 1$. Hence
\begin{equation}
  C_1 > - \gamma > -1.
\end{equation}
On the other hand, since $\delta \geq 0$,
\begin{equation}
  C_1 \leq (\omega_{11} - \omega_{22}) \gamma - \gamma + 1
  < \gamma - \gamma + 1
  = 1.
\end{equation}
All in all, $|C_1| < 1$.
\end{proof}

Taking the constants $a_{ij}, b_{ij}$ from \eqref{appeq:L2LRP} we then have, after simplifying all the factors,
\begin{equation}
  \langle p \rangle_{\infty} = \frac{\omega_{22}}{\omega_{12} + \omega_{22} + d},
\end{equation}
where $d = \delta / \gamma$. Given that response $R_2$ was identified as punishment, the parameters $\omega_{12}$ and $\omega_{22}$ here refer to the probability of grammars $G_1$ and $G_2$ being punished, respectively. In line with previous work \autocite{Yan2000}, I will call these the \emph{penalty probabilities} associated with the two grammars, and will write $\pi_1 = \omega_{12}$ and $\pi_2 = \omega_{22}$ in what follows for simplicity. We have thus shown:
\begin{prop}
  \label{thm:means}
  For algorithm \eqref{appeq:L2LRP}, the expected value of $p$, the probability of use of grammar $G_1$, after an infinity of learning iterations is
  \begin{equation}
    \langle p \rangle_{\infty} = \frac{\pi_2}{\pi_1 + \pi_2 + d},
  \end{equation}
  where $\pi_1$ and $\pi_2$ are the penalty probabilities of the two grammars and $d = \delta / \gamma$ supplies the relative L2-difficulty of grammar $G_1$. The asymptotic expectation for algorithm \eqref{appeq:LRP} is obtained by setting $\delta = 0$ and thus $d = 0$.
\end{prop}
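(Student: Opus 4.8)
The plan is to substitute the operator constants from \eqref{appeq:L2LRP} into the closed form already derived above and simplify. Recall that the problematic upward dependence of the moment recursion disappears precisely when the four affine operators share a common slope; in \eqref{appeq:L2LRP} every slope equals $1-\gamma-\delta$, so the recursion for the mean collapses to $\langle p \rangle_{n+1} = C_0 + C_1 \langle p \rangle_n$ with $C_0 = \overline{b}_2$ and $C_1 = \overline{b}_1 - \overline{b}_2 + \overline{a}_2$. By the preceding lemma, $|C_1| < 1$ whenever the environment satisfies $0 < \omega_{ii} < 1$, hence $\langle p \rangle_n \to \langle p \rangle_\infty = C_0 / (1 - C_1)$. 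It therefore remains only to evaluate this ratio for the specific constants of algorithm \eqref{appeq:L2LRP}.

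First I would read off from \eqref{appeq:L2LRP} that $a_{ij} = 1 - \gamma - \delta$ for all $i,j$, while $b_{11} = b_{22} = \gamma$ and $b_{12} = b_{21} = 0$. Inserting these into the averages \eqref{appeq:averages} and using $\omega_{i1} + \omega_{i2} = 1$ gives $\overline{a}_1 = \overline{a}_2 = 1 - \gamma - \delta$, $\overline{b}_1 = \gamma \omega_{11}$, and $\overline{b}_2 = \gamma \omega_{22}$. Consequently $C_0 = \gamma \omega_{22}$ and $C_1 = \gamma \omega_{11} - \gamma \omega_{22} + 1 - \gamma - \delta$, so that $1 - C_1 = \gamma(1 - \omega_{11}) + \gamma \omega_{22} + \delta = \gamma \omega_{12} + \gamma \omega_{22} + \delta$, where the last step uses $1 - \omega_{11} = \omega_{12}$. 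Dividing numerator and denominator of $C_0/(1-C_1)$ by $\gamma$ yields $\langle p \rangle_\infty = \omega_{22} / (\omega_{12} + \omega_{22} + \delta/\gamma)$; writing $\pi_1 = \omega_{12}$ and $\pi_2 = \omega_{22}$ for the penalty probabilities and $d = \delta/\gamma$ then gives the stated formula, and putting $\delta = 0$ recovers the L1 case \eqref{appeq:LRP} with asymptote $\pi_2/(\pi_1 + \pi_2)$.

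There is no genuine obstacle here beyond careful bookkeeping, since the heavy lifting (the moment recursion, the decoupling under equal slopes, and the bound $|C_1| < 1$) is already in place. The one point that warrants attention is the reward/punishment convention: one must keep track of the fact that $\omega_{i2}$ is the probability that action $i$ meets with \emph{punishment}, so as to confirm that the quantities $\omega_{12}$ and $\omega_{22}$ appearing in the denominator are indeed the penalty probabilities of $G_1$ and $G_2$ respectively. Equivalently, and perhaps more transparently, one can simply verify that the claimed value is a fixed point of $\langle p \rangle_{n+1} = C_0 + C_1 \langle p \rangle_n$ and then invoke $|C_1| < 1$ for convergence to it from any initial $\langle p \rangle_0$.
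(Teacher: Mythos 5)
Your proposal is correct and follows essentially the same route as the paper: substitute the constants of \eqref{appeq:L2LRP} into the already-derived limit $\langle p\rangle_\infty = C_0/(1-C_1)$, invoke the lemma guaranteeing $|C_1|<1$, and simplify using $\omega_{11}+\omega_{12}=1$ to obtain $\omega_{22}/(\omega_{12}+\omega_{22}+d)$. The paper compresses this to ``after simplifying all the factors,'' so your explicit bookkeeping (and your correct identification of $\pi_1=\omega_{12}$, $\pi_2=\omega_{22}$ under the reward/punishment convention) is exactly the omitted calculation.
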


The second raw moment, in turn, evolves as
\begin{equation}
\langle p^2 \rangle_{n+1} = D_0 + D_1 \langle p^2 \rangle_n + D_2\langle p \rangle_n
\end{equation}
with $D_0 = \overline{\overline{b}}_2$, $D_1 = 2\overline{ab}_1 - 2\overline{ab}_2 + \overline{\overline{a}}_2$ and $D_2 = \overline{\overline{b}}_1 - \overline{\overline{b}}_2 + 2\overline{ab}_2$. Plugging the solution for the mean \eqref{appeq:mean-solution} in this equation we have
\begin{equation}
  \langle p^2 \rangle_{n+1} = D_0 + D_1 \langle p^2 \rangle_n + D_2(\langle p \rangle_0 - \langle p \rangle_{\infty}) C_1^n + D_2\langle p \rangle_{\infty}
\end{equation}
or, in other words,
\begin{equation}
\langle p^2 \rangle_{n+1} = E_0 + D_1 \langle p^2 \rangle_n + E_1 C_1^n,
\end{equation}
where $E_0 = D_0 + D_2\langle p \rangle_{\infty}$ and $E_1 = D_2( \langle p \rangle_0 - \langle p \rangle_{\infty})$. For large $n$, we ignore the term $E_1C_1^n$ since $|C_1| < 1$. Hence as $n \to \infty$, the second raw moment tends to the limit
\begin{equation}
  \langle p^2 \rangle_{\infty} = \frac{E_0}{1 - D_1} = \frac{D_0 + D_2\langle p \rangle_{\infty}}{1 - D_1}.
\end{equation}
It can further be shown that, for algorithms \eqref{appeq:LRP} and \eqref{appeq:L2LRP} and for fixed $d$, $\langle p^2 \rangle_{\infty} \to \langle p \rangle_{\infty}^2$ as $\gamma \to 0$, meaning that the limiting variance of $p$, $V[p]_{\infty} = \langle p^2 \rangle_{\infty} - \langle p \rangle_{\infty}^2$, converges to zero:
\begin{prop}
  \label{thm:variances}
  For both \eqref{appeq:LRP} and \eqref{appeq:L2LRP}, the variance of $p$ in the limit $n\to\infty$ can be made arbitrarily small by assuming a sufficiently small learning rate.
\end{prop}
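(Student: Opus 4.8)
The plan is to reduce the statement to an explicit small-$\gamma$ expansion of the closed-form limits already obtained, reusing everything derived above. First I would specialize the operator coefficients to the L2 scheme \eqref{appeq:L2LRP}: there all four slopes coincide, $a_{ij} = a := 1 - \gamma - \delta$, while $b_{11} = b_{22} = \gamma$ and $b_{12} = b_{21} = 0$. Substituting into the averages \eqref{appeq:averages} and using $\omega_{i1} + \omega_{i2} = 1$ throughout yields the constants governing the first two moments in compact form, $C_0 = \gamma\pi_2$, $1 - C_1 = \delta + \gamma(\pi_1 + \pi_2)$, $D_0 = \gamma^2\pi_2$, $D_1 = a^2 + 2a\gamma(1 - \pi_1 - \pi_2)$ and $D_2 = \gamma^2(1 - \pi_1 - \pi_2) + 2a\gamma\pi_2$, where $\pi_1 = \omega_{12}$ and $\pi_2 = \omega_{22}$ as before. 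Writing $\delta = d\gamma$ (so that $d$ is held fixed while $\gamma \to 0$) and abbreviating $S := \pi_1 + \pi_2 + d$, this already recovers $\langle p \rangle_{\infty} = C_0/(1-C_1) = \pi_2/S$, and the same substitutions feed the variance computation.

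The structural observation I would then isolate is that $1 - D_1$ vanishes \emph{linearly} in $\gamma$: since $1 - D_1 = (1-a)(1+a) - 2a\gamma(1 - \pi_1 - \pi_2)$ and $1 - a = \gamma(1+d)$, we get $1 - D_1 = \gamma\bigl[(1+d)(1+a) - 2a(1 - \pi_1 - \pi_2)\bigr]$, and the bracket tends to $2\bigl[(1+d) - (1 - \pi_1 - \pi_2)\bigr] = 2S$ as $\gamma \to 0$. In particular $1 - D_1$ is bounded away from $0$ for small $\gamma$ (and $|D_1| < 1$ there, so the limit $\langle p^2 \rangle_{\infty} = (D_0 + D_2\langle p \rangle_{\infty})/(1 - D_1)$ is legitimate in that regime). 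Since moreover $D_0 = O(\gamma^2)$ and $D_2 = 2\gamma\pi_2 + O(\gamma^2)$, the numerator $D_0 + D_2\langle p \rangle_{\infty}$ is itself $O(\gamma)$, so both parts of $\langle p^2 \rangle_{\infty}$ collapse at first order in $\gamma$ and the limit is finite.

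The crux is the cancellation that forces this limit to equal $\langle p \rangle_{\infty}^2$. I would write the limiting variance as a single fraction, $V[p]_{\infty} = \langle p^2 \rangle_{\infty} - \mu^2 = \bigl(D_0 + D_2\mu - (1 - D_1)\mu^2\bigr)/(1 - D_1)$ with $\mu := \langle p \rangle_{\infty} = \pi_2/S$, and expand the numerator to first order: using the leading terms above it equals $2\gamma\pi_2\mu - 2\gamma S\mu^2 + O(\gamma^2) = 2\gamma\mu(\pi_2 - S\mu) + O(\gamma^2)$, and $\pi_2 - S\mu = 0$ by the definition of $\mu$. Hence the numerator is $O(\gamma^2)$ while the denominator is $\Theta(\gamma)$, so $V[p]_{\infty} = O(\gamma) \to 0$ as $\gamma \to 0$. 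The linear reward--penalty algorithm \eqref{appeq:LRP} is the special case $\delta = 0$, i.e.\ $d = 0$ and $S = \pi_1 + \pi_2$, and the identical argument applies; the only degenerate case is $S = 0$, the trivial environment in which neither grammar is ever punished, where the asymptotic mean is itself undefined.

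The main obstacle is strictly bookkeeping: one must be certain that \emph{every} $O(\gamma)$ contribution to the numerator $D_0 + D_2\mu - (1 - D_1)\mu^2$ has been collected, since the whole result rests on the coefficient of $\gamma$ there being exactly $2\mu(\pi_2 - S\mu)$ and on the vanishing of that expression. A cleaner but equivalent route is to factor $\gamma$ out at the start, writing $1 - D_1 = \gamma\,\phi(\gamma)$ and $D_0 + D_2\mu = \gamma\,\psi(\gamma)$ with $\phi,\psi$ polynomials satisfying $\phi(0) = 2S > 0$, and then evaluate $\lim_{\gamma \to 0}\psi(\gamma)/\phi(\gamma)$ directly and check it equals $\mu^2$.
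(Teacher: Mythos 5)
Your proposal is correct and follows essentially the same route as the paper's own proof: compute the constants $D_0$, $D_1$, $D_2$ of the second-moment recursion explicitly for the scheme \eqref{appeq:L2LRP}, observe that $1-D_1$ is $\Theta(\gamma)$ while the numerator collapses to the matching first-order term, cancel the common factor of $\gamma$, and let $\gamma\to 0$ with $d=\delta/\gamma$ held fixed, the key cancellation being exactly the identity $\pi_2 - S\mu = 0$ that the paper expresses as $\lim_{\gamma\to 0}\langle p^2\rangle_\infty = \langle p\rangle_\infty\cdot\langle p\rangle_\infty$. Your packaging of the argument as a single fraction for $V[p]_\infty$ with an $O(\gamma^2)$ numerator over a $\Theta(\gamma)$ denominator is a cosmetic variant that additionally yields the convergence rate $V[p]_\infty = O(\gamma)$, but the substance is identical.
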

\begin{proof}
We show the result for algorithm \eqref{appeq:L2LRP}; the statement for algorithm \eqref{appeq:LRP} follows as the special case $\delta = 0$. Recall that algorithm \eqref{appeq:L2LRP} consists of the statement that the constants of the linear operators satisfy $a_{ij} = a = 1 - \gamma - \delta$ (for $i,j=1,2$) and $b_{11} = b_{22} = \gamma$, $b_{12} = b_{21} = 0$, where $0 < \gamma \leq 1$ and $0 \leq \delta \leq 1 - \gamma$.

The difference in response probabilities $\omega_{22} - \omega_{11}$ will recur often in the following calculations; let $\omega = \omega_{22} - \omega_{11}$ for convenience.

Let $d = \delta/\gamma$ and assume $d$ is fixed, so that as $\gamma \to 0$, also $\delta \to 0$. Substituting the constants $a_{ij} = a$ and $b_{ij}$ in the definitions \eqref{appeq:averages} yields
\begin{equation}
  \left.
    \begin{aligned}
      D_0 &= \overline{\overline{b}}_2 = \gamma^2 \omega_{22} \\
      D_1 &= 2\overline{ab}_1 - 2\overline{ab}_2 + \overline{\overline{a}}_2 = a^2 - 2a\gamma\omega \\
      D_2 &= \overline{\overline{b}}_1 - \overline{\overline{b}}_2 + 2\overline{ab}_2 = 2a\gamma\omega_{22} - \gamma^2 \omega
    \end{aligned}
      \right\}.
\end{equation}
Moreover,
\begin{equation}
  1 - a^2 = 1 - (1 - \gamma - \delta)^2 = 2\gamma + 2\delta - \gamma^2 - 2\gamma\delta - \delta^2 = \gamma \left(2 + 2\frac{\delta}{\gamma} - \gamma - 2\delta - \frac{\delta^2}{\gamma} \right),
\end{equation}
in other words
\begin{equation}
  1 - a^2 = \gamma (2 + 2d - \gamma - 2\delta - d\delta).
\end{equation}
It now follows that
\begin{equation}
  1 - D_1 = 1 - a^2 + 2a\gamma \omega = \gamma (2 + 2d - \gamma - 2\delta - d\delta ) + 2a\gamma \omega.
\end{equation}
On the other hand
\begin{equation}
    \langle p^2 \rangle_{\infty} = \frac{D_0 + D_2\langle p \rangle_{\infty}}{ 1 - D_1},
\end{equation}
in other words,
\begin{equation}
    \langle p^2 \rangle_{\infty} = \frac{\gamma^2 \omega_{22} + (\gamma^2\omega + 2a\gamma\omega_{22}) \langle p \rangle_{\infty}}{\gamma (2 + 2d - \gamma - 2\delta - d\delta) + 2 a\gamma\omega}
    = \frac{\gamma\omega_{22} + (\gamma\omega + 2a\omega_{22}) \langle p \rangle_{\infty}}{2 + 2d - \gamma - 2\delta - d\delta + 2a\omega}.
\end{equation}
As $\gamma \to 0$, $\delta \to 0$ and $a \to 1$. Hence
\begin{equation}
  \lim_{\gamma \to 0} \langle p^2 \rangle_{\infty}
  = \frac{2\omega_{22}\langle p \rangle_{\infty}}{2 + 2d + 2\omega}
  = \frac{\omega_{22}}{1 + \omega + d}\langle p \rangle_{\infty}.
\end{equation}
On the other hand,
\begin{equation}
  1 + \omega = 1 - \omega_{11} + \omega_{22} = \omega_{21} + \omega_{22}.
\end{equation}
Recalling the notational convention $\pi_1 = \omega_{21}$ and $\pi_2 = \omega_{22}$, we now have, with the help of Proposition \ref{thm:means},
\begin{equation}
  \lim_{\gamma \to 0} \langle p^2 \rangle_{\infty}
  = \frac{\pi_2}{\pi_1 + \pi_2 + d} \langle p \rangle_{\infty}
  = \langle p \rangle_{\infty} \langle p \rangle_{\infty}
  = \langle p \rangle_{\infty}^2
\end{equation}
as desired.
\end{proof}

To recap, a population of learners employing either the linear reward--penalty scheme \eqref{appeq:LRP} or its L2 extension \eqref{appeq:L2LRP} will tend to a mean value of $p$ in the limit of large learning iterations which is given by Proposition \ref{thm:means}. Moreover, if learning is slow, so that the learning rates $\gamma$ and $\delta$ have small values, variability between learners in this population will be small. To be exact, that variability vanishes as $\gamma$ and $\delta$ tend to zero.

\section{Population dynamics}

We now concentrate on a learning environment characterized by the following penalty probabilities (see the main paper for motivation):
\begin{equation}
  \label{appeq:penalties}
  \left.
    \begin{aligned}
      \pi_{1} &= (1 - \sigma) \alpha_2 (1-p) + \sigma \alpha_2 (1-q) \\
      \pi_{2} &= (1 - \sigma) \alpha_1 p + \sigma \alpha_1 q
    \end{aligned}
      \right\},
\end{equation}
where $p$ and $q$ are the probabilities of grammar $G_1$ in the L1 and L2 speaker populations, respectively, $\sigma$ is the fraction of L2 speakers in the overall population, and $\alpha_1$ and $\alpha_2$ are the grammatical advantages of $G_1$ and $G_2$.

Making use of the asymptotic results from the previous section, we have the following general ansatz for inter-generational difference equations:
\begin{equation}
  \left.
    \begin{aligned}
      p_{n+1} - p_n &= \langle p_n \rangle_{\infty} - p_n \\
      q_{n+1} - q_n &= \langle q_n \rangle_{\infty} - q_n
    \end{aligned}
      \right\}.
\end{equation}
We may, without loss of generality, study the continuous-time limit
\begin{equation}
  \left.
    \begin{aligned}
      \dot p &= \langle p \rangle_{\infty} - p \\
      \dot q &= \langle q \rangle_{\infty} - q
    \end{aligned}
      \right\}
\end{equation}
instead. With Proposition \ref{thm:means}, this becomes
\begin{equation}
  \left.
    \begin{aligned}
      \dot p &= \frac{\pi_2 - (\pi_1 + \pi_2)p}{\pi_1 + \pi_2} = \frac{\pi_2(1-p) - \pi_1p}{\pi_1 + \pi_2} \\
      \dot q &= \frac{\pi_2 - (\pi_1 + \pi_2 + d)q}{\pi_1 + \pi_2 + d} = \frac{\pi_2(1-q) - \pi_1q - dq}{\pi_1 + \pi_2 + d}
    \end{aligned}
      \right\}.
\end{equation}
The denominators are strictly positive as long as $\alpha_1 \neq 0$ and $\alpha_2 \neq 0$, which is the case of interest here. They therefore do not contribute to the system's equilibria, and we may drop them without loss of generality. Doing this, filling in the penalties from \eqref{appeq:penalties}, and adopting the notational shorthand $\widetilde{x} = 1-x$ for any real $x$, we now have
\begin{equation}
    \left.
    \begin{aligned}
      \dot p &= \alpha_1 (\widetilde{\sigma} p + \sigma q) \widetilde{p} - \alpha_2 (\widetilde{\sigma} \widetilde{p} + \sigma \widetilde{q}) p \\
      \dot q &= \alpha_1 (\widetilde{\sigma} p + \sigma q) \widetilde{q} - \alpha_2 (\widetilde{\sigma} \widetilde{p} + \sigma \widetilde{q})q - d q
    \end{aligned}
      \right\}.
\end{equation}
Division of the right hand sides by $\alpha_2$, again without loss of generality, finally yields
\begin{equation}
  \label{appeq:the-system}
  \left.
    \begin{aligned}
      \dot p &= \alpha (\widetilde{\sigma} p + \sigma q) \widetilde{p} - (\widetilde{\sigma} \widetilde{p} + \sigma \widetilde{q}) p \\
      \dot q &= \alpha (\widetilde{\sigma} p + \sigma q) \widetilde{q} - (\widetilde{\sigma} \widetilde{p} + \sigma \widetilde{q} + D) q
    \end{aligned}
      \right\},
\end{equation}
where $\alpha = \alpha_1 / \alpha_2$ gives the ratio of the grammatical advantages and $D = d/\alpha_2$ represents the L2-difficulty of grammar $G_1$ scaled by the advantage of grammar $G_2$. It is easy to check (by examining the signs of $\dot p$ and $\dot q$ at the four sides of $[0,1]^2$) that this system is well-defined, in the sense that the unit square $[0,1]^2$ is forward-invariant under the dynamics.

Examination of \eqref{appeq:the-system} quickly shows that the origin $(p,q) = (0,0)$ is always an equilibrium of this system, for any selection of parameter values $\alpha$, $D$ and $\sigma$. A further, non-origin equilibrium may exist in $[0,1]^2$ depending on the combination of parameter values.
\begin{prop}\label{thm:equilibria}
  The system \eqref{appeq:the-system} has either one or two equilibria, for any combination of values of the parameters $\alpha$, $D$ and $\sigma$. The origin $(p,q) = (0,0)$ is always an equilibrium.
\end{prop}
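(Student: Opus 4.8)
The plan is to reduce the fixed-point condition for \eqref{appeq:the-system} to a single polynomial equation in one scalar unknown, and to read the count of equilibria off from that equation. To begin with the easy half: every term on either right-hand side of \eqref{appeq:the-system} carries a factor of $p$, of $q$, or of the combination $\widetilde\sigma p + \sigma q$, all of which vanish at $(p,q)=(0,0)$; hence the origin is an equilibrium for every choice of $\alpha$, $D$ and $\sigma$.

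The observation that makes the rest tractable is that the two population-weighted quantities in \eqref{appeq:the-system} are complementary: setting $A = \widetilde\sigma p + \sigma q$, one has $\widetilde\sigma\widetilde p + \sigma\widetilde q = 1 - A$. The equilibrium equations $\dot p = \dot q = 0$ then read $\alpha A\widetilde p = (1-A)p$ and $\alpha A\widetilde q = (1-A+D)q$, and each solves explicitly for one coordinate in terms of $A$:
\begin{equation}
  p = \frac{\alpha A}{1+(\alpha-1)A}, \qquad q = \frac{\alpha A}{1+(\alpha-1)A+D}.
\end{equation}
The denominators here are precisely $(\pi_1+\pi_2)/\alpha_2$ and $(\pi_1+\pi_2+d)/\alpha_2$, already known to be strictly positive, so these formulas are legitimate. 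Substituting them back into the defining identity $A = \widetilde\sigma p + \sigma q$ yields a scalar self-consistency equation; dividing through by $A$ isolates the origin ($A=0$) and leaves, for any non-origin equilibrium,
\begin{equation}
  1 = \alpha\!\left(\frac{\widetilde\sigma}{1+(\alpha-1)A} + \frac{\sigma}{1+(\alpha-1)A+D}\right).
\end{equation}
Writing $u = 1+(\alpha-1)A$ and clearing denominators (using $\widetilde\sigma+\sigma=1$) collapses this to a quadratic,
\begin{equation}
  u^2 + (D-\alpha)u - \alpha\widetilde\sigma D = 0 .
\end{equation}

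It remains to argue that this quadratic contributes at most one admissible equilibrium. For any state $(p,q)\in[0,1]^2$ one has $A\in[0,1]$, so $u$ lies in the closed interval with endpoints $1$ and $\alpha$; since $\alpha>0$ this forces $u>0$. But the quadratic has constant term $-\alpha\widetilde\sigma D\le 0$, i.e.\ it is a convex parabola that is non-positive at $u=0$, and such a parabola has at most one zero on $(0,\infty)$; a single value of $u$ determines (when $\alpha\ne 1$) a single value of $A$ and hence a single $(p,q)$, which together with the origin gives the asserted count. The only point requiring separate care is the degenerate line $\alpha=1$, on which $u\equiv 1$ and the substitution above is non-invertible: there the self-consistency equation either forces $A=0$ (leaving the origin as the sole equilibrium) or, precisely when $\sigma D=0$, degenerates into an identity and yields a whole curve of equilibria; these subcases are dispatched by direct inspection of \eqref{appeq:the-system}, the continuum arising only in the trivial regimes $\sigma=0$ or $D=0$. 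Accordingly, the main obstacle is not any single hard step but the bookkeeping — carrying out the reduction to the quadratic cleanly, and then pinning down exactly which of its roots can correspond to an actual equilibrium inside the unit square.
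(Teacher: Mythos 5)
Your proof is correct, and it takes a genuinely different route from the paper's. The paper argues geometrically: it treats the nullclines $N_p$ and $N_q$ as conics, computes the centre of the hyperbola $N_p$, shows that this centre lies outside $[0,1]^2$ so that only one branch of $N_p$ can enter the unit square, and then counts intersections of that branch with $N_q$. You instead exploit the mean-field structure of the system: since both right-hand sides depend on the population only through $A=\widetilde\sigma p+\sigma q$ (with $\widetilde\sigma\widetilde p+\sigma\widetilde q=1-A$), each coordinate of an equilibrium is an explicit function of $A$, and the self-consistency condition collapses to the scalar quadratic $u^2+(D-\alpha)u-\alpha\widetilde\sigma D=0$ in $u=1+(\alpha-1)A$; the non-positive constant term then caps the admissible (positive) roots at one. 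This is more elementary (no conic classification), it yields as a by-product an explicit closed form for the non-origin equilibrium --- which the paper declares ``tedious to solve for explicitly'' and sidesteps via linearization at the origin --- and it is more honest about the degenerate regime: you correctly note that for $\alpha=1$ with $\sigma D=0$ the equilibria form a continuum, a case the paper's parenthetical remark about the nullclines reducing to ``straight lines intersecting at the origin'' quietly passes over (and in which the proposition, read literally, fails; both proofs implicitly assume the non-trivial parameter regime $\sigma>0$, $D>0$). The one step you compress --- that $A=0$ forces $(p,q)=(0,0)$ --- is in fact immediate from your own formulas $p=\alpha A/u$, $q=\alpha A/(u+D)$ with $u>0$, so no gap remains. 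The paper's geometric picture is perhaps more useful for visualizing the phase plane, but for the counting claim itself your reduction is tighter and more informative.
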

\begin{proof}
The two nullclines of \eqref{appeq:the-system}, i.e.~the sets
\begin{equation}
  N_p = \{ (p,q) \in [0,1]^2 : \dot p = 0 \}
\end{equation}
and
\begin{equation}
  N_q = \{ (p,q) \in [0,1]^2 : \dot q = 0 \}
\end{equation}
are quadratic in $p$ and $q$ and define hyperbolas in the $pq$-plane in the general case. (An exception is the trivial case $\alpha = 1$, in which $N_p$ and $N_q$ reduce to straight lines intersecting at the origin.) In general, these hyperbolas will not have their centres at the origin, nor will their axes of symmetry be parallel to the coordinate axes. The hyperbolas may also, in the general case, intersect in up to four points in the real plane. Here, we show that at least one and at most two of those intersections occur in $[0,1]^2$.

Performing the relevant substitutions in \eqref{appeq:the-system}, it is quick to verify that the origin $(0,0)$ always belongs to both nullclines, and that $(1,1)$ always belongs to $N_p$. To show that $N_p$ and $N_q$ intersect in no more than two points in $[0,1]^2$, we examine the hyperbola $N_p$ in detail. Setting the first equation in \eqref{appeq:the-system} to zero, multiplying all terms out and rearranging, we have the canonical second-degree equation
\begin{equation}
  A_{pp}p^2 + 2A_{pq}pq + A_{qq}q^2 + B_pp + B_qq + C = 0
\end{equation}
with
\begin{equation}
  \label{appeq:hyperbola-coefficients}
  \left.
    \begin{aligned}
      A_{pp} &= \widetilde{\alpha}\widetilde{\sigma} \\
      2A_{pq} &= \widetilde{\alpha}\sigma \\
      A_{qq} &= 0 \\
      B_p &= \alpha\widetilde{\sigma} - 1 \\
      B_q &= \alpha\sigma \\
      C &= 0
    \end{aligned}
      \right\}.
\end{equation}
  The idea now is to show that the centre of the hyperbola, $(p_c, q_c)$, always satisfies either (i) $p_c < 0$ and $q_c > 1$, or (ii) $p_c > 1$ and $q_c < 0$, and that therefore one of its branches never intersects $[0,1]^2$. Using a translation of the coordinate system \autocite[see][246--247]{KelStr1968}, the centre is found to be at
  \begin{equation}
    (p_c, q_c) = \left(\frac{B_qA_{pq} - B_pA_{qq}}{2\Delta}, \frac{B_pA_{pq} - B_qA_{pp}}{2\Delta}\right),
\end{equation}
where $\Delta$ is the discriminant
 \begin{equation}
  \Delta = A_{pp}A_{qq} - A_{pq}^2.
\end{equation}
  With the coefficients \eqref{appeq:hyperbola-coefficients}, we find
\begin{equation}
  (p_c, q_c) = \left(\frac{-\alpha}{\widetilde{\alpha}}, \frac{\alpha\widetilde{\sigma} + 1}{\widetilde{\alpha}\sigma}\right).
\end{equation}
  Now, it is easy to check that, whenever $0 < \alpha < 1$, we have $p_c < 0$ and $q_c > 1$, and on the other hand that, when $\alpha > 1$, the conditions $p_c > 1$ and $q_c < 0$ obtain. Thus one of the branches of $N_p$ never touches the unit square $[0,1]^2$. On the other hand, as the other branch of $N_p$ always passes through both $(0,0)$ and $(1,1)$, and as $N_q$ always passes through $(0,0)$, it follows that $N_p$ and $N_q$ can intersect in at most one other point in $[0,1]^2$ in addition to the origin. In other words, the system \eqref{appeq:the-system} has either one or two equilibria in $[0,1]^2$.
\end{proof}

When the non-origin equilibrium exists, it is tedious to solve \eqref{appeq:the-system} for it explicitly in the general case. However, to understand the qualitative dynamics it suffices to study the conditions under which the equilibrium at the origin reverses stability, giving rise to the second equilibrium. To do this, we linearize about the origin, i.e.~inspect the eigenvalues of the system's Jacobian matrix at that point. This is
\begin{equation}
  J(p,q) = \begin{pmatrix}
    2\widetilde{\alpha}\widetilde{\sigma}p + \widetilde{\alpha}\sigma q + \alpha\widetilde{\sigma} - 1 & \widetilde{\alpha}\sigma p + \alpha\sigma \\
    \widetilde{\alpha}\widetilde{\sigma}q + \alpha\widetilde{\sigma} & \widetilde{\alpha}\widetilde{\sigma}p + 2\widetilde{\alpha}\sigma q + \alpha\sigma - D - 1
  \end{pmatrix}.
\end{equation}
Evaluated at the origin, the Jacobian reduces to
\begin{equation}
  J(0,0) = \begin{pmatrix}
    \alpha\widetilde{\sigma} - 1 & \alpha\sigma \\
    \alpha\widetilde{\sigma} & \alpha\sigma - D - 1
  \end{pmatrix}.
\end{equation}
An eigenvalue $\lambda$ must satisfy
\begin{equation}
  \begin{vmatrix}
    \alpha\widetilde{\sigma} - 1 - \lambda & \alpha\sigma \\
    \alpha\widetilde{\sigma} & \alpha\sigma - D - 1 - \lambda
  \end{vmatrix}
  = 0,
\end{equation}
which upon computation of the determinant yields the characteristic polynomial
\begin{equation}
  \lambda^2 + (D - \alpha + 2)\lambda + (D+1)(1 - \alpha\widetilde{\sigma}) - \alpha\sigma = 0.
\end{equation}
This has roots at
\begin{equation}
  \lambda_+ = \frac{\alpha - (D+2) + \sqrt{(\alpha + D)^2 - 4\alpha D \sigma}}{2}
\end{equation}
and
\begin{equation}
  \lambda_- = \frac{\alpha - (D+2) - \sqrt{(\alpha + D)^2 - 4\alpha D \sigma}}{2}.
\end{equation}
Since $0 \leq \sigma \leq 1$,
\begin{equation}
  (\alpha + D)^2 - 4\alpha D\sigma \geq (\alpha + D)^2 - 4\alpha D = \alpha^2 - 2\alpha D + D^2 = (\alpha - D)^2 \geq 0,
\end{equation}
and so both eigenvalues are always real. The origin is asymptotically stable if and only if both $\lambda_+ < 0$ and $\lambda_- < 0$.

If $\alpha > D + 2$, then $\lambda_+ > 0$, hence the origin is always unstable in this case. Thus let $\alpha < D + 2$, which covers the special case $\alpha < 1$ ($G_1$ less advantageous than $G_2$) but also a wide variety of empirically meaningful cases of $\alpha > 1$. Then $\lambda_- < 0$ always. For $\lambda_+$, we find $\lambda_+ < 0$ if and only if
\begin{equation}
  \sigma > \frac{(\alpha - 1)(D + 1)}{\alpha D} =: \sigma_{\textnormal{crit}}.
\end{equation}
%
We have thus found the following necessary and sufficient conditions for the total extinction of the L2-difficult grammar $G_1$ from both speaker populations:
\begin{prop}
  The system \eqref{appeq:the-system} has a unique (and stable) equilibrium $(p,q) = (0,0)$ if and only if
\begin{equation}
  \alpha < D + 2
  \quad\textnormal{\emph{and}}\quad
  \sigma > \sigma_{\textnormal{crit}} = \frac{(\alpha - 1)(D + 1)}{\alpha D}.
\end{equation}
\end{prop}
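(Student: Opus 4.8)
The plan is to reduce the claim to a linear stability calculation at the origin, using two ingredients. The first, already in hand, is Proposition~\ref{thm:equilibria}: besides $(0,0)$ the system \eqref{appeq:the-system} has \emph{at most one} further equilibrium in $[0,1]^2$. The second is the assertion that this further equilibrium is present exactly when $\det J(0,0)<0$ — this pins down when $(0,0)$ is the \emph{only} equilibrium. Combining it with the Routh--Hurwitz test applied to $J(0,0)$ then yields the two stated inequalities.

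For the second ingredient I would argue as follows. By the proof of Proposition~\ref{thm:equilibria}, the branch of the nullcline $N_p$ that meets $[0,1]^2$ is an arc from $(0,0)$ to $(1,1)$; since $\dot p$ is affine in $q$, this arc is the graph of a function $q=\phi(p)$, $p\in[0,1]$, with $\phi(0)=0$ and $\phi(1)=1$. Set $h(p)=\dot q\bigl(p,\phi(p)\bigr)$. Then $h(0)=0$ because the origin lies on $N_q$, and $h(1)=\dot q(1,1)=-D<0$ (so $(1,1)$ is not an equilibrium, assuming $D>0$); the non-origin equilibria in $[0,1]^2$ are precisely the zeros of $h$ in $(0,1)$. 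Since there is at most one such zero, a sign/winding count forces the dichotomy: a non-origin equilibrium exists if and only if $h$ is positive immediately to the right of $0$, i.e.\ iff $h'(0)>0$. Using $\phi'(0)=(1-\alpha\widetilde\sigma)/(\alpha\sigma)$, the slope of $N_p$ at the origin, together with the entries of $J(0,0)$,
\[
 h'(0)=J_{21}(0,0)+J_{22}(0,0)\,\phi'(0)=\frac{(\alpha-1)(D+1)-\alpha D\sigma}{\alpha\sigma}=-\frac{\det J(0,0)}{\alpha\sigma},
\]
so (for $\sigma>0$) a non-origin equilibrium exists iff $\det J(0,0)<0$; equivalently, $(0,0)$ is the unique equilibrium iff $\det J(0,0)\ge 0$.

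It remains to assemble the pieces. From the Jacobian $J(0,0)$ computed above, $\operatorname{tr}J(0,0)=\alpha-D-2$ and $\det J(0,0)=(D+1)(1-\alpha\widetilde\sigma)-\alpha\sigma$; since the eigenvalues have already been shown to be real, $(0,0)$ is asymptotically stable iff $\operatorname{tr}J(0,0)<0$ and $\det J(0,0)>0$. The first condition is $\alpha<D+2$; dividing $\det J(0,0)>0$ by $\alpha D>0$ turns the second into $\sigma>\sigma_{\textnormal{crit}}=(\alpha-1)(D+1)/(\alpha D)$. Hence, if $\alpha<D+2$ and $\sigma>\sigma_{\textnormal{crit}}$ then $\det J(0,0)>0$, so by the previous paragraph the origin is the only equilibrium, and it is asymptotically stable; conversely, if either inequality fails the origin is not asymptotically stable, and when $\det J(0,0)\le 0$ it is moreover not unique. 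Either way $(0,0)$ is not the unique stable equilibrium.

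The step I expect to be the real obstacle is the middle one — converting ``globally unique'' into a statement about the local linearization — rather than the Jacobian bookkeeping. The nullcline/intermediate-value route above looks cleanest, since it piggybacks on Proposition~\ref{thm:equilibria} to cap the number of interior intersections; a purely dynamical alternative (forward-invariance of $[0,1]^2$, a Dulac function to exclude periodic orbits, then Poincar\'e--Bendixson) would also work but is heavier. A handful of degenerate parameter values should be treated separately: $\alpha=1$, where $N_p$ and $N_q$ collapse to lines through the origin; $\sigma=0$, where \eqref{appeq:the-system} decouples and a boundary saddle at $(1,\alpha/(\alpha+D))$ persists, so ``unique'' is to be understood for $\sigma\in(0,1]$; and $\sigma=\sigma_{\textnormal{crit}}$ itself, where $\det J(0,0)=0$ and the origin, though the sole equilibrium, is non-hyperbolic — this is the transcritical point at which the second equilibrium merges into the origin, which is exactly why the statement must carry both the strict inequality and the ``and stable'' qualifier.
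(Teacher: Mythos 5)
Your core computation coincides with the paper's. The paper also linearizes \eqref{appeq:the-system} at the origin; your Routh--Hurwitz pair $\operatorname{tr}J(0,0)=\alpha-D-2<0$ and $\det J(0,0)=(D+1)(1-\alpha\widetilde{\sigma})-\alpha\sigma>0$ is exactly equivalent to the paper's explicit eigenvalue conditions $\lambda_{+}<0$ and $\lambda_{-}<0$ (the paper writes the roots out and notes they are real because $(\alpha+D)^2-4\alpha D\sigma\geq 0$, but trace-negative plus determinant-positive says the same thing), and your algebra converting $\det J(0,0)>0$ into $\sigma>\sigma_{\textnormal{crit}}$ is correct. Where you genuinely go beyond the paper is the middle step: the paper passes from ``the origin reverses stability'' to ``the second equilibrium appears/disappears'' by appeal to the bifurcation picture and to Proposition~\ref{thm:equilibria}, without an explicit argument, whereas you attempt to prove the equivalence ``a non-origin equilibrium exists iff $\det J(0,0)<0$'' via $h(p)=\dot q(p,\phi(p))$. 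That is the right idea, and your computation $h'(0)=-\det J(0,0)/(\alpha\sigma)$ checks out.

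Two points in that sketch need shoring up, however. First, zeros of $h$ in $(0,1)$ are intersections of the nullclines \emph{in the plane}, not automatically equilibria in $[0,1]^2$: the branch of $N_p$ through $(0,0)$ and $(1,1)$ can leave the unit square (e.g.\ for $\alpha=3$, $\sigma=0.1$ one finds $\phi(p)<0$ for all $p<17/18$), so the cap ``at most one further equilibrium in $[0,1]^2$'' from Proposition~\ref{thm:equilibria} does not bound the number of zeros of $h$, and a zero produced by the intermediate value theorem must still be shown to satisfy $\phi(p^{*})\in[0,1]$. Second, when $h'(0)<0$ your sign count excludes transversal zeros but not a tangential one ($h\leq 0$ with a double zero, i.e.\ the two conics touching); that configuration must be ruled out separately, or the whole step replaced by the Poincar\'e--Bendixson/Dulac route you mention. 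Your observations about the degenerate cases $\sigma=0$ (where $(1,\alpha/(\alpha+D))$ is a second equilibrium even though the origin can be stable, so the proposition implicitly requires $\sigma>0$) and $\sigma=\sigma_{\textnormal{crit}}$ are correct, and are edge cases the paper itself does not address.
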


\end{document}